\definecolor{cvprblue}{rgb}{0.21,0.49,0.74}
\tiny\color{gray},
\DeclareRobustCommand\onedot{\futurelet\@let@token\@onedot}
\def\@onedot{\ifx\@let@token.\else.\null\fi\xspace}
\def\eg{\emph{e.g}\onedot} 
\def\ie{\emph{i.e}\onedot} 
\def\etc{\emph{etc}\onedot} 
\def\wrt{w.r.t\onedot} 
\newlength\savewidth\newcommand\shline{\noalign{\global\savewidth\arrayrulewidth
  \global\arrayrulewidth 1pt}\hline\noalign{\global\arrayrulewidth\savewidth}}
\newcommand{\tablestyle}[2]{\setlength{\tabcolsep}{#1}\renewcommand{\arraystretch}{#2}\centering\footnotesize}
\renewcommand{\paragraph}[1]{\vspace{1.25mm}\noindent\textbf{#1}}
\newcolumntype{x}[1]{>{\centering\arraybackslash}p{#1pt}}
\newcolumntype{y}[1]{>{\raggedright\arraybackslash}p{#1pt}}
\newcolumntype{z}[1]{>{\raggedleft\arraybackslash}p{#1pt}}
\newcommand{\app}{\raise.17ex\hbox{$\scriptstyle\sim$}}
\definecolor{deemph}{gray}{0.6}
\definecolor{baselinecolor}{gray}{.9}
\newcommand{\baseline}[1]{\cellcolor{baselinecolor}{#1}}
\title{Once for Both: Single Stage of Importance 
and Sparsity Search for Vision Transformer Compression}
\author{Hancheng Ye$^{1, 2}$ \quad
    Chong Yu$^{3}$ \quad
    Peng Ye$^{1}$ \quad
    Renqiu Xia$^{2, 4}$ \quad
    Yansong Tang$^{5}$
    \\
    Jiwen Lu$^{5}$ \quad
    Tao Chen$^{1}$\footnotemark[2] \quad
    Bo Zhang$^{2}$\footnotemark[3]
    \\
    {\normalsize $^{1}$School of Information Science and Technology, Fudan University} \quad
    {\normalsize $^{2}$Shanghai Artificial Intelligence Laboratory} \\
    {\normalsize $^{3}$Academy for Engineering and Technology, Fudan University}\quad
    {\normalsize $^{4}$Shanghai Jiao Tong University}\quad
    {\normalsize $^{5}$Tsinghua University}
}
\begin{document}
\maketitle
\renewcommand{\thefootnote}{\fnsymbol{footnote}}
\footnotetext[2]{Corresponding author. 
$^\ddagger$Project lead.}
\begin{abstract}
Recent Vision Transformer Compression (VTC) works mainly follow a two-stage scheme, where the importance score of each model unit is first evaluated or preset in each submodule, followed by the sparsity score evaluation according to the target sparsity constraint. Such a separate evaluation process induces the gap between importance and sparsity score distributions, thus causing high search costs for VTC.
In this work, for the first time, we investigate how to integrate the evaluations of importance and sparsity scores into a \textbf{single} stage, searching the optimal subnets in an efficient manner. Specifically, we present OFB, a cost-efficient approach that simultaneously evaluates both importance and sparsity scores, termed Once for Both (OFB), for VTC. First, a bi-mask scheme is developed by entangling the importance score and the differentiable sparsity score to jointly determine the pruning potential (prunability) of each unit.
Such a bi-mask search strategy is further used together with a proposed adaptive one-hot loss to realize the progressive-and-efficient search for the most important subnet. Finally, Progressive Masked Image Modeling (PMIM) is proposed to regularize the feature space to be more representative during the search process, which may be degraded by the dimension reduction.
Extensive experiments demonstrate that OFB can achieve superior compression performance over state-of-the-art searching-based and pruning-based methods under various Vision Transformer architectures, meanwhile promoting search efficiency significantly, \eg, costing one GPU search day for the compression of DeiT-S on ImageNet-1K.
\end{abstract}
\section{Introduction}
\label{sec:intro}

\begin{figure}[t]
\vspace{-1em}
    \centering
    \small
    \resizebox{0.93\linewidth}{!}{
    \includegraphics{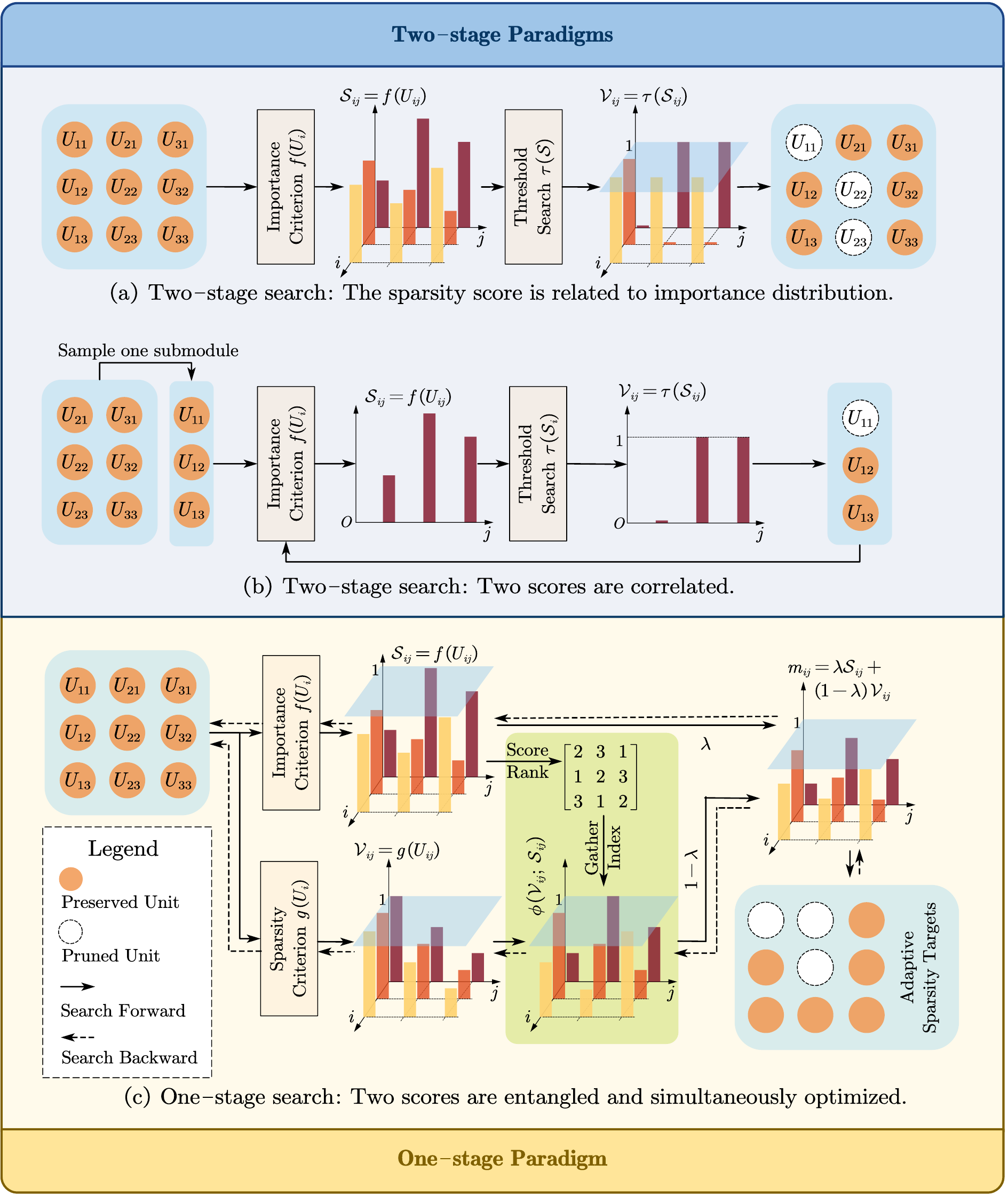}}
    \vspace{-1em}
    \caption{The relationship between importance and sparsity score distributions in different search paradigms. (a) Importance scores are fixed during sparsity search, and sparsity scores are related to importance scores. (b) Importance scores of one submodule are also related to the sparsity of other to-prune submodules. (c) Importance and sparsity scores are entangled and simultaneously optimized, thus correlated at forward and backward phases of searching.}
    \label{fig:correlation}
    \vspace{-2.6em}
\end{figure}

Vision Transformers (ViTs) are developing rapidly in many practical tasks, but they suffer from substantial computational costs and storage overhead, preventing their deployments for resource-constrained scenarios. Vision Transformer Compression (VTC), as an effective technique to relieve such problems, has advanced a lot and can be divided into several types including Transformer Architecture Search (TAS) \cite{autoformer, sss, vitas, uninet, nasvit, tu2023efficient, tftas} and Transformer Pruning (TP)~\cite{spvit, iared, vit_slim, s2vit, s2vite, cpvit, vtclfc, wdpruning} paradigms. Although both TAS and TP can produce compact ViTs, their search process for the target sparsity often relies on a two-stage scheme, \ie, \textit{importance-then-sparsity evaluation}\footnote{The importance evaluation aims at learning each unit's contribution to the prediction performance, while the sparsity evaluation aims at learning each unit's pruning choice. In general, the importance and sparsity score distributions are correlated in the search process, as shown in Fig. \ref{fig:correlation}.} for units (\eg, filters) in each dimension / submodule, which potentially hinders search performance and efficiency of both paradigms.

\begin{figure}[t]
    \centering
    \small
    \resizebox{0.88\linewidth}{!}{
    \includegraphics{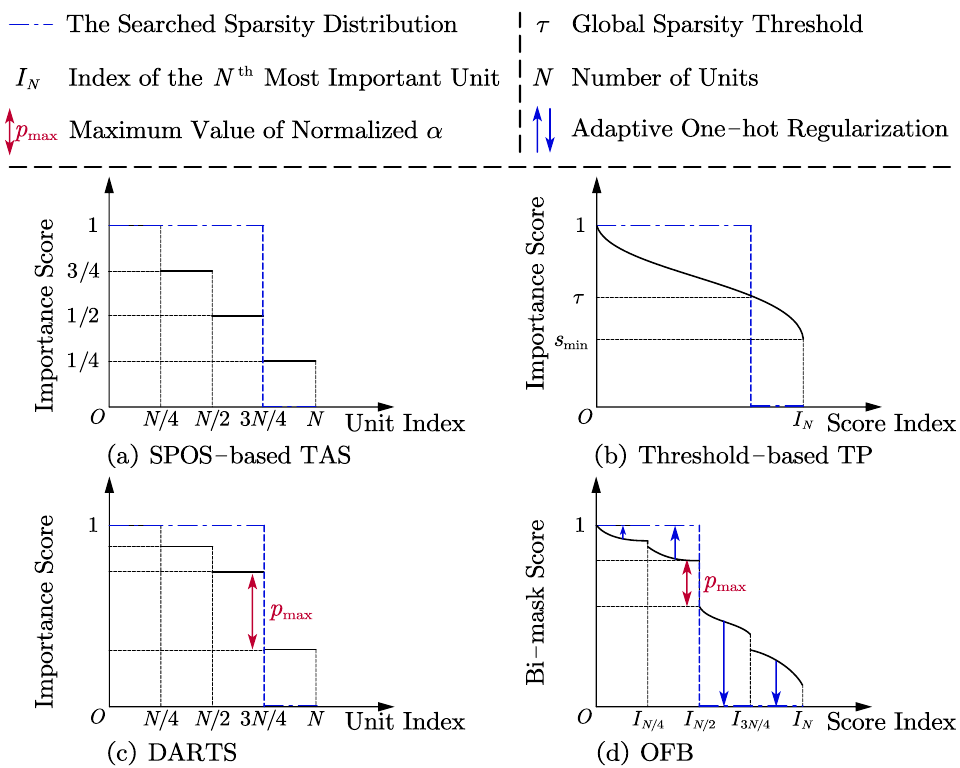}}
    \vspace{-1em}
    \caption{Different paradigms for VTC. (a): SPOS-based TAS implicitly encodes the piecewise-decreasing importance scores for units due to the uniform sampling in pre-training; (b): The threshold-based TP explicitly evaluates the importance scores for units and sets a global threshold to perform pruning; (c): DARTS learns the importance distribution in a differentiable manner and selects the subnet of the highest architecture score; (d): OFB proposes the bi-mask score that entangles importance and sparsity scores together, to perform the search process in a \textbf{single} stage.}
    \label{fig:difference}
    \vspace{-2em}
\end{figure}

As for TAS that mainly follows the Single-Path-One-Shot (SPOS) search paradigm \cite{spos}, the importance scores of units in each submodule are implicitly encoded into the supernet \cite{vitas}, as shown in Fig. \ref{fig:difference}\textcolor{red}{a}. This is mainly due to the ordinal weight-sharing mechanism during the pre-training of the pre-designed supernet \cite{vitas}.
In other words, the submodules with small indexes are implicitly assigned higher importance scores by uniform sampling during the supernet pre-training. Afterwards, evolutionary algorithms are employed to search for the optimal subnet given the implicitly-encoded importance score distribution and target sparsity constraint \cite{autoformer, nat, bossnas, vitas, tftas}. Such an implicit encoding process causes TAS limited to compressing a supernet from scratch, thus leading to a high search cost.

On the other hand, for TP that adopts the threshold-based pruning paradigm, the importance scores are pre-evaluated by a designed criterion, followed by the sparsity search using a designed strategy based on the importance distribution. However, searching for the fine-grained discrete sparsity from the evaluated importance distribution of each dimension is intractable and identified as an NP-hard problem \cite{rbp}. As visualized in Fig. \ref{fig:difference}\textcolor{red}{b}, the importance score distribution of one dimension is usually continuous, with most points distributed around the mean value. Considering that the importance distribution varies in different dimensions and may be influenced by the pruning choice of other dimensions \cite{vitas, pagcp}, the traditional threshold-based methods can hardly search for the optimal compact models in a global manner. From the above analysis, \textit{the high compression costs can be largely attributed to the separate score evaluation, and the gap between importance and sparsity score distributions}.

To tackle the above issues induced by the two-stage VTC scheme, we propose to conduct the ViTs search in a one-stage manner, where the importance and sparsity scores are learned simultaneously and entangled, to learn a discrete sparsity distribution from the entangled distribution adaptively. To achieve this, inspired by the differentiable search strategy in DARTS \cite{darts, pdarts, ye2022beta}, we relax the sparsity score to a differentiable variable, and formulate a bi-mask score that entangles the importance and sparsity scores of each unit, to jointly assess the unit's prunability. Secondly, to optimize the bi-mask score, we introduce an adaptive one-hot loss function to adaptively convert the continuous bi-mask score into a binary one, \ie, the unit's pruning choice, as shown in Fig. \ref{fig:difference}\textcolor{red}{d}. Finally, during the search, we further develop a Progressive Masked Image Modeling (PMIM) technique, to regularize the dimension-reduced feature space with negligible additional costs. Our main contributions are:

\begin{itemize}
    \item To our best knowledge, our method is the first to explore the entanglement of importance and sparsity distributions in VTC, which relieves the bottleneck of searching the discrete sparsity distribution from the continuous importance distribution, highlighting the search efficacy and effectiveness of various ViTs compression.
    \item We develop a novel one-stage search paradigm containing a bi-mask weight-sharing scheme and an adaptive one-hot loss function, to simultaneously learn the importance and sparsity scores and determine the units' prunability. Moreover, a PMIM regularization strategy is specially designed during searching, which gradually intensifies the regularization for representation learning as the feature dimension continues to be reduced.
    \item Extensive experiments are conducted on ImageNet for various ViTs. Results show that OFB outperforms existing TAS and TP methods with higher sparsity and accuracy, and significantly improves search efficiency, \eg, costing one GPU search day to compress DeiT-S on ImageNet.
\end{itemize}

\section{Related Works}
\vspace{-0.5em}
\label{sec:related_work}

\paragraph{Transformer Architecture Search.}
Recently, with various Vision Transformers spawning \cite{vit, swin, deit, twins}, several works have explored searching for the optimal Transformer-based architecture. Existing Transformer Architecture Search (TAS) works \cite{autoformer, vitas, tftas} mainly follow the SPOS NAS \cite{spos} scheme, which first trains the supernet from scratch by iteratively training the sampled subnets, then searches for the target optimal subnet, followed by retraining the searched model. These methods focus on either designing the search space or the training strategy for the randomly initialized supernet, yet the supernet training is still time-consuming due to the numerous randomly initialized parameters to be fully trained. To address this, TF-TAS \cite{tftas} provides a DSS indicator to evaluate the subnet performance without training all supernet parameters. Compared with prior methods, our work highlights the one-stage search for compact architectures in off-the-shelf pre-trained ViTs, thus saving high costs for supernet training and an extra sparsity search.

\paragraph{Vision Transformer Pruning.}
Unlike the pruning for Convolutional Neural Networks (CNNs) \cite{wen2016learning, tang2024enhanced}, the pruning for ViTs contains more prunable components, \eg, Patch Embedding, Patch Token, Multi-Head Self-Attention (MHSA), and MLPs, \etc. S$^2$VITE \cite{s2vite} presents a pruning-and-growing strategy with 50\% ratio to find the sparsity in several dimensions. WDpruning \cite{wdpruning} performs TP via binary masks and injected classifiers, meanwhile designing a learnable pruning threshold based on the parameter constraint. ViT-Slim \cite{vit_slim} employs soft masks with $\ell_1$ penalty by a manually-set global budget threshold for TP. UVC \cite{uvc} jointly combines different techniques to unify VTC. Compared with previous methods, our method features the entanglement of importance and sparsity distributions to jointly determine the prunability of each unit, and the adaptive conversion from the continuous score distribution into a discrete one, thus being able to better balance the sparsity constraint and model performance.

\paragraph{Masked Image Modeling.}
Masked Image Modeling (MIM) \cite{mae, simmim} is a self-supervised learning strategy for augmenting pre-training models, aiming to reconstruct the masked patches in the input image. Several works have explored the representation learning ability of MIM in the pre-training models for downstream tasks\cite{mimdet, mim_distill, m2ae, alpha_darts}, by predicting the patch- or feature-level labels. Differently, our work targets at the compression of pre-trained ViTs, and focuses on utilizing the representation learning ability of MIM to progressively improve the dimension-reduced feature space.
\vspace{-1.5em}
\section{The Proposed Approach}
\vspace{-0.5em}
\label{sec:approach}
We first revisit the two-stage search paradigm and identify its problem, then propose a one-stage counterpart containing a bi-mask weight-sharing scheme with an improved optimization objective, to learn importance and sparsity scores simultaneously. Finally, PMIM is designed to boost the dimension-reduced features and enhance search performance.

\begin{figure*}[t]
\vspace{-3em}
    \centering
    \small
    \resizebox{0.86\linewidth}{!}{
    \includegraphics{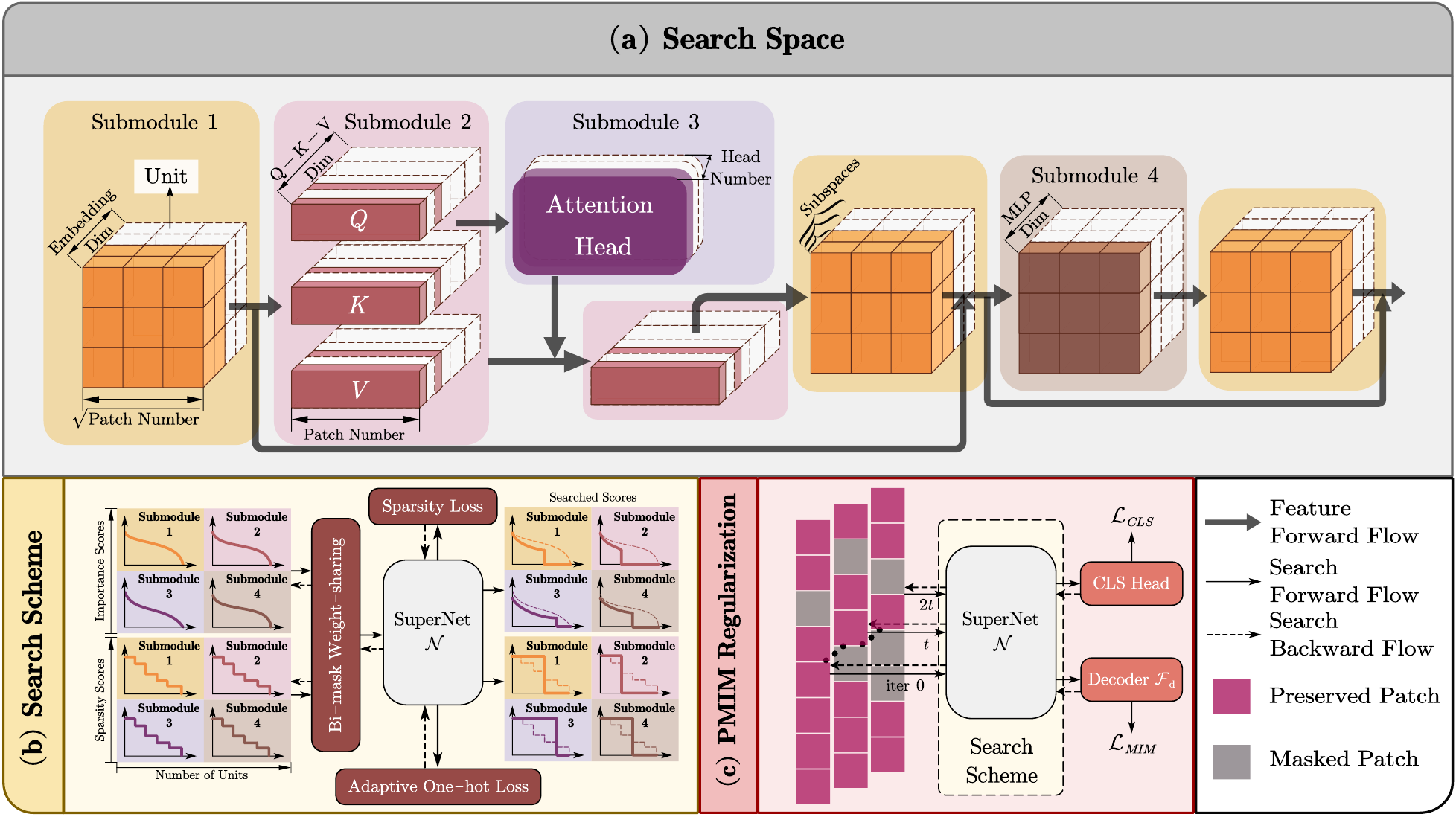}}
    \vspace{-0.5em}
    \caption{The overview of OFB search framework, including the design of search space, search scheme, and regularization scheme. (a) For the search space, we consider four types of submodules. (b) For the search scheme, we simultaneously learn the importance score $\mathcal{S}$ and the sparsity score $\mathcal{V}$ based on the bi-mask weight-sharing strategy, under the guidance of an adaptive one-hot loss. (c) The PMIM technique is developed to augment the pruned feature space, which introduces a progressive masking strategy to MIM for better regularization.}
    \label{fig:overview}
    \vspace{-1.5em}
\end{figure*}

\vspace{-0.5em}
\subsection{Problem Formulation}
\label{sec:pre}
\vspace{-0.5em}

Prior VTC works mainly focus on searching for an optimal sub-network given a supernet and the resource constraint. Let $\mathcal{N}(\mathcal{A}, W)$ denote the supernet, where $\mathcal{A}$ and $W$ refer to the architecture search space and weights of the supernet, respectively. The search for the optimal architecture can be generally formulated as a two-stage problem in Eq. (\ref{nas_obj}).
\begin{equation}\label{nas_obj}
\footnotesize
\begin{aligned}
    &\text{{\bf Stage 1. }} \mathcal{S}_\mathcal{A} = \boldsymbol{f}(W; \mathcal{A}); \\ 
    &\text{{\bf Stage 2. }} \mathop{\min}\limits_{\alpha \in \mathcal{A}, W} \mathcal{L}_{val}(\alpha, W; \mathcal{S}_\mathcal{A}),\,\, \text{s.t.}\,\, g(\alpha) \leq \tau,
\end{aligned}
\end{equation}
where $\boldsymbol{f}$ denotes the criterion to evaluate (\eg, TP) or preset (\eg, TAS) the importance score of each unit $\mathcal{S}_{\mathcal{A}}$ based on the search space, and $\mathcal{L}_{val}$ denotes the loss on the validation dataset. The $g$ and $\tau$ represent the computation cost and the corresponding constraint, respectively. In the first stage, the importance distribution is globally learned from the weights of the supernet (\eg, TP) or naturally encoded in the training mode of the supernet (\eg, TAS). Based on the (piecewise) continuous importance distribution, the architecture parameter $\alpha$ is optimized to satisfy the sparsity constraint via the global threshold or evolutionary algorithms in the second stage, which can be viewed as a discretization process. Since the importance distribution is fixed during the search, the gap between the importance distribution and the searched discrete sparsity distribution (pruned or not pruned) may cause the sub-optimal search result. In other words, the pre-assessed importance score may change with the discretization of other units, and cannot fully represent the actual importance distribution in the searched model. Therefore, a better indicator to assess the unit's prunability could be an adaptively discretized score, that bridges the gap between the importance and sparsity distributions. 

Inspired by DARTS \cite{darts}, which designs a differentiable scheme to relax the pruning choice of each subspace to a $\mathrm{softmax}$-activated probability over all subspaces in one dimension, we further develop a bi-mask scheme to learn the prunability of units in the pre-trained ViTs. In this scheme, the importance and sparsity scores are learned simultaneously in a differentiable manner to jointly determine the unit's prunability. In other words, the search objective is formulated into a one-stage problem, as shown in Eq. (\ref{our_objective}).
\begin{equation}\label{our_objective}
\footnotesize
    \mathop{\min}\limits_{\mathcal{S}, \mathcal{V}, W}\mathcal{L}_{train}(\mathcal{S}, \mathcal{V}, W), \,\, \text{s.t.}\,\, g(\mathcal{V}) \leq \tau,
\vspace{-0.5em}
\end{equation}
where importance scores $\mathcal{S}$, sparsity scores $\mathcal{V}$, and supernet weights $W$ are continually optimized to find the optimal subnet. Consequently, the model is evaluated and searched in a single stage, which is different from the prior works separately evaluating and searching subnets. The optimization of Eq. (\ref{our_objective}) comprises a bi-mask weight-sharing strategy to assign units with prunability scores, and an adaptive one-hot loss to achieve the target sparsity (See Sec. \ref{sec:bi-mask} and \ref{sec:one-hot}).

\begin{table}[t]
    \centering
    \resizebox{\linewidth}{!}{
    \begin{tabular}{lccccc}
        \toprule
         Model & Q-K-V ratio & MLP ratio & Head number & P. E. ratio \\ \hline
         DeiTs \cite{deit} & (1/4, 1, 1/8) & (1/4, 1, 1/8) & (1, num\_heads, 2) & (1/2, 1, 1/32)\\
         Swin-Ti \cite{swin} & (1/4, 1, 1/8) & (1/4, 1, 1/8) & (1, num\_heads, 2) & (1/2, 1, 1/32)\\
         \bottomrule
    \end{tabular}}
    \vspace{-0.5em}
    \caption{Search spaces of DeiTs \cite{deit} and Swin-Ti \cite{swin}. Tuples in parentheses denote the lowest value, the highest value, and step size. \textbf{Note}: the step size of P. E. is smaller for its more significant impact on multiple layers (See Fig. \ref{fig:overview}) and compression performance.}
    \label{tab:search_space}\vspace{-2em}
\end{table}

\paragraph{Search Space.} We first follow the TAS paradigm to construct a discrete search space for all units in each prunable submodule, including Q-K-V ratio, MLP ratio, Head number, and Patch Embedding (P. E.), as described in Table \ref{tab:search_space}. Then, the search space is relaxed in a differentiable manner.

\vspace{-1.0mm}
\subsection{Bi-mask Weight-sharing Strategy}
\label{sec:bi-mask}
\vspace{-1.0mm}
In order to assess the prunability of each unit, we introduce a bi-mask weight-sharing strategy in the search process. Each prunability score is represented by the value of the designed bi-mask $m_{ij}$ that considers both the importance score and the sparsity score, which can be illustrated as follows:
\begin{equation}
\footnotesize
\label{prunability_score}
    m_{ij}\left( t \right) =\lambda \left( t \right) \mathcal{S} _{ij} +\left[ 1-\lambda \left( t \right) \right] \mathcal{V} _{ij}\left( \alpha \right), 
\end{equation}
where the subscript index $ij$ denotes the $j$-th unit in the $i$-th prunable submodule. $\lambda(t)$ denotes the time-varying weight coefficient of the importance score. Specifically, $\lambda$ linearly changes from one to zero until the model finishes searching. The motivation behind this is two-fold. From the lens of score optimization, \ie, the backward process, the sparsity score of each unit is related to its importance rank among all units in the same submodule. Therefore, before transmitting a large gradient to the sparsity score, more attention should be paid to learning a reliable importance score. Since the model weights $W$ are well trained in the supernet, the importance score could be learned appropriately in several epochs, thus providing a relatively accurate importance rank for the assignment of the sparsity score to each unit. After obtaining a relatively accurate importance score, optimization should be focused more on the sparsity score to make the pruning decision. From the lens of the discretization process, \ie, the forward process, the search target is to learn a discrete score for each unit; thus, the searched score should approach an approximately binary (0/1) distribution, which is exactly the desired distribution of the sparsity score $\mathcal{V}$. Therefore, the learning focus of the prunablility score should be gradually transferred from importance to sparsity during searching.

As for the importance score $\mathcal{S}$, inspired by ViT-Slim \cite{vit_slim}, we introduce a soft mask that is randomly initialized and learnable in each unit, to indicate its contribution to supernet performance. The importance score is normalized to vary between $(0, 1)$ via $\mathrm{sigmoid}$. As for the sparsity score $\mathcal{V}$, we leverage the architecture parameter $\alpha$ to generate the sparsity score of each unit. Given $\alpha$, $\mathcal{V}$ is computed via $\mathrm{softmax}$ to indicate the preserving potential, as formulated in Eq. (\ref{sparsity_score}):
\begin{equation}
\vspace{-0.6em}
\footnotesize
    \mathcal{V} _{ij}(\alpha )=\frac{\sum\nolimits_{k=\lfloor j/\varDelta _i \rfloor}^{\|\alpha_i\|_0}{\exp \left( \alpha _{ik} \right)}}{\sum\nolimits_{k=0}^{\left\| \alpha _{i,:} \right\| \!\:\!_0}{\exp \left( \alpha _{ik} \right)}}=\sum\nolimits_{k=\lfloor j/\varDelta _i \rfloor}^{\|\alpha_i\|_0}{p_{ik}},
    \label{sparsity_score}
\end{equation}
where $\alpha_i$ is the architecture parameter vector of the $i$-th submodule to parameterize the sub-space into a continuous space. $p_{ik}$ and $\varDelta _i$ represent the step-wise normalized architecture score and the step size in the search space of the $i$-th submodule, respectively, where $p_{ik}=\mathrm{softmax}_k \left( \alpha _{ik} \right)$. Note that the weights in all sub-spaces of the submodule are shared as DARTS \cite{darts} does; therefore, the sparsity score of each unit is the sum of those shared architecture scores, making the sparsity distribution piecewise continuous. Unlike previous differentiable search methods uniformly initializing $\alpha$ for the randomly initialized supernet, our method randomly initializes $\alpha$ to reduce the inductive bias.

As for the weight-sharing strategy in differentiable search, considering the units with higher importance scores are more likely to be preserved, sparsity scores of more important units should be correspondingly higher than those less important (a high sparsity score means high preserving potential). Thus, at forward steps, the units in each submodule are reorganized \wrt their importance score rank and assigned sparsity scores in a descending order, as shown in Fig. \ref{fig:difference}\textcolor{red}{d}.

\subsection{Adaptive One-hot Loss}
\label{sec:one-hot}
Given bi-masks as introduced above, which soft mask units to indicate the prunability during searching, the optimization target of these masks is another important issue. In Sec \ref{sec:one-hot}, we present an adaptive one-hot loss to solve this problem.

Considering $m$ is derived from $\mathcal{S}$ and $\mathcal{V}$, the optimization could be decomposed into two parts. As for the importance score, the aim of $\mathcal{S}$ is to learn an importance rank according to the unit's impact on model performance under the sparsity constraint. Thus, we follow ViT-Slim \cite{vit_slim} to regularize $\mathcal{S}$ with $\ell_1$ norm to drive unimportant units towards low-ranking and zero-score distribution, \ie, $\mathcal{L}_\mathcal{S}=\|\mathcal{S}\| \!\!\:_1$.

As for the sparsity score, the aim of $\mathcal{V}$ is to learn a binary distribution as the unit pruning choice. In other words, the target label of each $p_i$ is ideally designed as a progressively shrunk one-hot vector, with no prior information about the one-hot index, thus being difficult to impose a definite target to $\mathcal{V}$ and $\alpha$. To address this, we propose to regularize the sparsity score by introducing an alternative constraint that aligns the entropy and variance of $p_i$ with one-hot vectors. The motivation stems from the invariance of the two properties in one-hot vectors, regardless of the one-hot index. Especially, the entropy of any one-hot vector always equals zero, while the variance solely depends on the dimension number. The regularization \wrt $p$ is formulated as follows:
\begin{equation}
\footnotesize
\label{regularization}
\begin{aligned}
R\left( p \right) &=\sum_{i=1}^M{\left[ \mathcal{H} \left( p _i \right) +\varPsi \left( p _i \right) \right]} \\ &=\sum_{i=1}^M{\left[ -p_{i}^{T}\log \left( p_i \right) +\tan \left( \frac{\pi}{2}-\pi \omega _i \right) \right]},
\end{aligned}
\end{equation}
where $M$ denotes the number of searchable submodules in {\small $\mathcal{N}(\mathcal{A}, W)$, $\omega _i=\sigma _{i}/\sigma _{i}^{t}$}
with $\sigma _{i}$, $\sigma _{i}^{t}$, and $\omega_i$ meaning the measured, target and normalized variances of $p_i$, respectively, where {\small $\sigma _{i}^{t}=\left( \left\| \alpha _i \right\| _{\!\:\!0}-1 \right) /\left\| \alpha _i \right\| _{\!\:\!0}^{2}$}.
(See Appendix \textcolor{red}{A} for more detailed explanations). In addition to Eq. (\ref{regularization}), $\mathcal{V}$ is also constrained by the sparsity constraint, $\tau$. Therefore, the total regularization objective of $\mathcal{V}$ is formulated as follows:
\begin{equation}
\footnotesize
    \label{sparsity_loss}
    \mathcal{L} _{\mathcal{V}}=\mu_1 R\left( p \right) +\mu_2 \left\| g\left( \mathcal{V} \right) -\tau \right\| \!\!\:_2,
\end{equation}
where $\mu_1$ and $\mu_2$ are the weight coefficients to balance two items. Note that during search, the ground-truth value $\sigma^t$ would change with the decrease of $\|\alpha\|\!\!\:_0$ when the pruning happens in $\alpha$. Thus, $\mathcal{L} _{\mathcal{V}}$ is adaptive to the pruning process in the search stage. The pruning process in one dimension (\eg, the $i$-th submodule) is triggered by the condition that {\small $\left( p_i \right) _{\min}\leqslant \eta \cdot \bar{p}_i$}, where $\eta$ is the scaling factor and $\bar{p}_i$ is the mean of $p_i$, \ie, {\small $\bar{p}_i=1/\left\| \alpha _i \right\| \!\!\:_0$}.
By Eq. (\ref{sparsity_loss}) and the proposed pruning strategy, the units with the lowest mask values can be progressively removed, thus accelerating search process.

Based on the above analysis, the regularization items for the bi-mask can be summarized as, {\small $\mathcal{L} _m(\mathcal{V} ,\mathcal{S} )=\mathcal{L} _{\mathcal{V}}+\mu _3\mathcal{L} _{\mathcal{S}}=\mu _1R\left( p \right) +\mu _2\left\| g\left( \mathcal{V} \right) -\tau \right\| \!\!\:_2+\mu _3\| \mathcal{S} \| \!\!\:_1$},
where $\mu_3$ denotes the weight coefficient of $\mathcal{L}_\mathcal{S}$. Consequently, the objective in Eq. (\ref{our_objective}) is transformed into the following equation:
\begin{minipage}{\linewidth}
\vspace{-1em}
\begin{algorithm}[H]
\small
\caption{Once for Both (OFB).}
\label{algo}
\renewcommand{\algorithmicensure}{\textbf{Input:}}
\begin{algorithmic}[1]

\Ensure{Pre-trained ViT $\mathcal{N}$, Decoder $\mathcal{F}_d$, Search Space $\mathcal{A}$, Dataset $\mathcal{D}$, Masking Ratio $\gamma$, Pruning Interval $\Delta\mathrm{T}$, Target Pruning Ratio $\tau$;}
\State{Initialize Importance Score Set $\{\mathcal{S}\}$ and Architecture Parameter Set $\{\alpha\}$ according to $\mathcal{A}$;}
\State{Compute $m$ via Eq. (\ref{prunability_score}) and insert $m$ to units in the search space as soft masks;}
\For {each training iteration $t$}
\State \begin{varwidth}[t]{\linewidth}
Sample $b_t\sim\mathcal{D}$ and random mask $\gamma$ patches;
\end{varwidth}
\State \begin{varwidth}[t]{0.95\linewidth}
Forward [$\mathcal{N}(W; m)$; $\mathcal{F}_d$] with masked $b_t$;
\end{varwidth}
\State \begin{varwidth}[t]{0.95\linewidth}
Update $\mathcal{S}$, $\mathcal{V}, W$ by optimizing Eq. (\ref{mim_objective});
\end{varwidth}
\State \begin{varwidth}[t]{\linewidth}
Linearly update $\gamma$ and $\lambda$;
\end{varwidth}
\State \begin{varwidth}[t]{\linewidth}
Update $m$ via Eq. (\ref{prunability_score});
\end{varwidth}
\If{not $finish\,search$ and ($t$ mod $\Delta\mathrm{T}== 0$)} 
\For {each submodule $\alpha_i$ in $\mathcal{A}$}
\If{$\left( p_i \right) _{\min}\leqslant \eta \cdot \bar{p}_i$}
\State{Prune the units whose $p_i \leqslant \eta \cdot \bar{p}_i$;}
\EndIf
\EndFor
\EndIf
\EndFor\\
\Return{the pruned ViT satisfying the target sparsity.}
\end{algorithmic}
\end{algorithm}
\vspace{-1em}
\end{minipage}
\begin{equation}
\footnotesize
\label{transformed_objective}
    \min_{\mathcal{S}, \mathcal{V} ,W} \mathcal{L} _{train}(\mathcal{S}, \mathcal{V}, W)+\mathcal{L} _m(\mathcal{V} ,\mathcal{S}).
\end{equation}

\begin{table*}[t]
\vspace{-2em}
\centering
\resizebox{\textwidth}{!}{
\begin{tabular}{y{60}x{60}y{100}x{80}x{80}x{80}x{80}x{80}x{80}}
\toprule
 & Method & Model & \#Param (M) & FLOPs (B) & Top-1 (\%) & Top-5 (\%) & GPU Days \\ \midrule
\multirow{22}{*}{DeiT-S} & Baseline & DeiT-S \cite{deit} & 22.1 & 4.6 & 79.8 & 95.0 & - \\ \cmidrule(){2-8} 
 & \multirow{4}{*}{TP} & SSP-T$^\dagger$ \cite{s2vite} & 4.2 & 0.9 & 68.6 & - & - \\
 &  & S$^2$ViTE-T$^\dagger$ \cite{s2vite} & 4.2 & 0.9 & 70.1 & - & - \\
 &  & WDPruning-0.3-12$^\dagger$ \cite{wdpruning} & 3.8 & 0.9 & 71.1 & 90.1 & - \\
 &  & S$^2$ViTE-S \cite{s2vite} & - & 2.1 & 74.8 & - & - \\ \cmidrule(){2-8} 
 & \multirow{1}{*}{TAS} & ViTAS-B \cite{vitas} & - & 1.0 ($\downarrow$78\%) & 72.4 ($\downarrow$7.4)& 90.6 ($\downarrow$4.4)& 32 \\ \cmidrule(){2-8} 
 &  \cellcolor[gray]{0.9}& \cellcolor[gray]{0.9} OFB & \cellcolor[gray]{0.9}4.4 ($\downarrow$80\%) &\cellcolor[gray]{0.9} 0.9 ($\downarrow$80\%)&\cellcolor[gray]{0.9} 75.0 ($\downarrow$4.8) &\cellcolor[gray]{0.9} 92.3 ($\downarrow$2.7)&\cellcolor[gray]{0.9} 1 \\ \cmidrule(){2-8} 
 & \multirow{3}{*}{TAS} & AutoFormer-Ti \cite{autoformer} & 5.7 ($\downarrow$74\%)& 1.3 ($\downarrow$72\%)& 74.7 ($\downarrow$5.1)& 92.6 ($\downarrow$2.4)& 24 \\
 &  & ViTAS-C \cite{vitas} & 5.6 ($\downarrow$75\%)& 1.3 ($\downarrow$72\%)& 74.7 ($\downarrow$5.1)& 91.6 ($\downarrow$3.4)& 32 \\
 &  & TF-TAS-Ti \cite{tftas} & 5.9 ($\downarrow$73\%)& 1.4 ($\downarrow$70\%)& 75.3 ($\downarrow$4.5)& 92.8 ($\downarrow$2.2)& 0.5 \\ \cmidrule(){2-8} 
 & \multirow{2}{*}{Lightweight} & DeiT-Ti \cite{deit} & 5.7 & 1.3 & 72.2 & 91.1 & - \\
 &  & TNT-Ti \cite{tnt} & 6.1 & 1.4 & 73.9 & 91.9 & - \\ \cmidrule(){2-8} 
 & \cellcolor[gray]{0.9} & \cellcolor[gray]{0.9}OFB &\cellcolor[gray]{0.9} 5.3 ($\downarrow$76\%)&\cellcolor[gray]{0.9} 1.1 ($\downarrow$76\%)&\cellcolor[gray]{0.9} 76.1 ($\downarrow$3.7)&\cellcolor[gray]{0.9} 92.8 ($\downarrow$2.2)&\cellcolor[gray]{0.9} 1 \\ \cmidrule(){2-8} 
 & \multirow{5}{*}{TP} & SSP-S \cite{s2vite} & 14.6 ($\downarrow$34\%)& 3.1 ($\downarrow$33\%)& 77.7 ($\downarrow$2.1)& - & - \\
 &  & S$^2$ViTE-T \cite{s2vite} & 14.6 ($\downarrow$34\%)& 2.7 ($\downarrow$41\%)& 78.2 ($\downarrow$1.6)& - & - \\
 &  & ViT-Slim \cite{vit_slim} & 11.4 ($\downarrow$48\%)& 2.3 ($\downarrow$50\%)& 77.9 ($\downarrow$1.9)& 94.1 ($\downarrow$0.9)& 1.8 \\
 &  & WDPruning-0.3-12 \cite{wdpruning} & - & 2.6 ($\downarrow$43\%)& 78.4 ($\downarrow$1.4)& - & - \\ \cmidrule(){2-8} 
 & \multirow{2}{*}{Lightweight} & HVT \cite{hvt} & - & 2.4 & 78.0 & - & - \\
 & & PVT-Ti \cite{pvt} & 13.2 & 1.9 & 75.1 & - & - \\ \cmidrule(){2-8} 
 & \cellcolor[gray]{0.9} & \cellcolor[gray]{0.9}OFB & \cellcolor[gray]{0.9}8.0 ($\downarrow$64\%)& \cellcolor[gray]{0.9}1.7 ($\downarrow$63\%)&\cellcolor[gray]{0.9} 78.0 ($\downarrow$1.8)&\cellcolor[gray]{0.9} 93.9 ($\downarrow$1.1)&\cellcolor[gray]{0.9} 1 \\ \midrule
\multirow{19}{*}{DeiT-B} & Baseline & DeiT-B \cite{deit} & 86.6 & 17.5 & 81.8 & 95.6 & - \\ \cmidrule(){2-8} 
 & \multirow{4}{*}{TAS} & AutoFormer-S \cite{autoformer} & 22.9 ($\downarrow$74\%)& 5.1 ($\downarrow$71\%)& 81.7 ($\downarrow$0.1)& 95.7 ($\uparrow$0.1)& 24 \\
 &  & ViTAS-F \cite{vitas} & 27.6 ($\downarrow$68\%)& 6.0 ($\downarrow$66\%)& 80.5 ($\downarrow$1.3)& 95.1 ($\downarrow$0.5)& 32 \\
 &  & GLiT-S \cite{glit} & 24.6 ($\downarrow$72\%)& 4.4 ($\downarrow$75\%)& 80.5 ($\downarrow$1.3)& - & - \\ 
 &  & TF-TAS-S \cite{tftas} & 22.8 ($\downarrow$74\%)& 5.0 ($\downarrow$71\%)& 81.9 ($\uparrow$0.1)& 95.8 ($\uparrow$0.2)& 0.5 \\ \cmidrule(){2-8} 
 & \multirow{2}{*}{TP} & DynamicViT-S$^\star$ \cite{dynamicvit} & 22.0 & 4.0 & 79.8 & - & - \\
 &  & ViT-Slim$^\star$ \cite{vit_slim} & 17.7 & 3.7 & 80.6 & 95.3 & 3 \\ \cmidrule(){2-8} 
 & Lightweight & DeiT-S \cite{deit} & 22.1 & 4.6 & 79.8 & 95.0 & - \\\cmidrule(){2-8} 
 & \cellcolor[gray]{0.9} & \cellcolor[gray]{0.9}OFB & \cellcolor[gray]{0.9}17.6 ($\downarrow$80\%)& \cellcolor[gray]{0.9}3.6 ($\downarrow$79\%)& \cellcolor[gray]{0.9}80.3 ($\downarrow$1.5)&\cellcolor[gray]{0.9} 95.1 ($\downarrow$0.5)&\cellcolor[gray]{0.9} 2.9 \\ \cmidrule(){2-8} 
 & \multirow{3}{*}{TAS} & AutoFormer-B \cite{autoformer} & 54.0 ($\downarrow$38\%)& 11.0 ($\downarrow$37\%)& 82.4 ($\uparrow$0.6)& 95.7 ($\uparrow$0.1)& 24 \\
 &  & GLiT-B \cite{glit} & 96.0 ($\uparrow$11\%)& 17.0 ($\downarrow$3\%)& 82.3 ($\uparrow$0.5)& - & - \\
 &  & TF-TAS-S \cite{tftas} & 54.0 ($\downarrow$38\%)& 12.0 ($\downarrow$31\%)& 82.2 ($\uparrow$0.4)& 95.6 ($\downarrow$0.0)& 0.5 \\ \cmidrule(){2-8} 
 & \multirow{4}{*}{TP} & VTP \cite{vtp} & - & 10.0 ($\downarrow$43\%)& 80.7 ($\downarrow$1.1)& 95.0 ($\downarrow$0.6)& - \\
 &  & S$^2$ViTE-B \cite{s2vite} & 56.8 ($\downarrow$34\%)& 11.7 ($\downarrow$33\%)& 82.2 ($\uparrow$0.4)& - & - \\
 &  & ViT-Slim \cite{vit_slim} & 52.6 ($\downarrow$39\%)& 10.6 ($\downarrow$39\%)& 82.4 ($\uparrow$0.6)& 96.1 ($\uparrow$0.5)& 3 \\
 &  & WDPruning-0.3-11 \cite{wdpruning} & - & 9.9 ($\downarrow$43\%)& 80.8 ($\downarrow$1.0)& 95.4 ($\downarrow$0.1) & - \\ \cmidrule(){2-8} 
 & \cellcolor[gray]{0.9} & \cellcolor[gray]{0.9}OFB & \cellcolor[gray]{0.9}43.9 ($\downarrow$49\%)& \cellcolor[gray]{0.9}8.7 ($\downarrow$50\%) &\cellcolor[gray]{0.9} 81.7 ($\downarrow$0.1)&\cellcolor[gray]{0.9} 95.8 ($\uparrow$0.2)&\cellcolor[gray]{0.9} 4 \\ \bottomrule
\end{tabular}}
\vspace{-1em}
\caption{Compression results of DeiT models on ImageNet-1K. $\dagger$ and $\star$ indicate that the compression model is based on DeiT-Ti and DeiT-S \cite{deit}, respectively. $\uparrow/\downarrow$ refers to the increase$/$decrease ratio.}
\label{deit_imagenet}
\vspace{-1.5em}
\end{table*}

\subsection{Progressive MIM}
As discussed above, during the search, pruning units can reduce training costs but may impair model performance. Consequently, the evaluation of the importance and sparsity scores for the remaining units may be unreliable, especially in aggressive compression scenarios where a large proportion of units are pruned. To address this, we propose to further enhance the representative capability of the remaining features using the MIM technique \cite{simmim} during the search. Yet, deploying MIM in a well-trained ViT requires considering the characteristics of VTC. Specifically, using a large masking ratio similar to SimMIM \cite{mae, simmim} may be inappropriate for VTC as the model would deviate from the original classification-oriented feature space to the reconstruction-oriented space, losing the original contextual information. Motivated by this, we propose a progressive MIM strategy for searching, where the masking ratio gradually increases until reaching the preset threshold. Consequently, as the pruning ratio gets larger, the feature regularization is strengthened with negligible additional cost \cite{simmim}, thereby maintaining the representation ability and a reliable prunability evaluation of the remaining units. Then, the optimization objective is updated to:
\begin{equation}
    \footnotesize
    \label{mim_objective}
    \min_{\mathcal{S}, \mathcal{V} ,W} \mathcal{L} _{train}(\mathcal{S}, \mathcal{V} ,W)+\mathcal{L} _m(\mathcal{V} ,\mathcal{S}) + \mathcal{L}_{rec}(\mathcal{S}, \mathcal{V}, W; \gamma),
\end{equation}
where $\mathcal{L}_{rec}$, $\gamma$ denote the reconstruction loss and the masking ratio, respectively. Alg. (\ref{algo}) shows the overall algorithm.

\section{Experiments}
\vspace{-0.5em}
\label{sec:experiment}

We first present the compression results for DeiTs \cite{deit} and Swin Transformer \cite{swin} on ImageNet-1K \cite{imagenet}. Next, we analyze the generalization ability of the searched models on downstream benchmarks. Lastly, we explore the contribution of each individual component to search performance, followed by visualizations of the search process and results. The implementation details are described in Appendix \textcolor{red}{C}.

\begin{figure}[t]
    \centering
    \small
    \resizebox{0.66\linewidth}{!}{\includegraphics{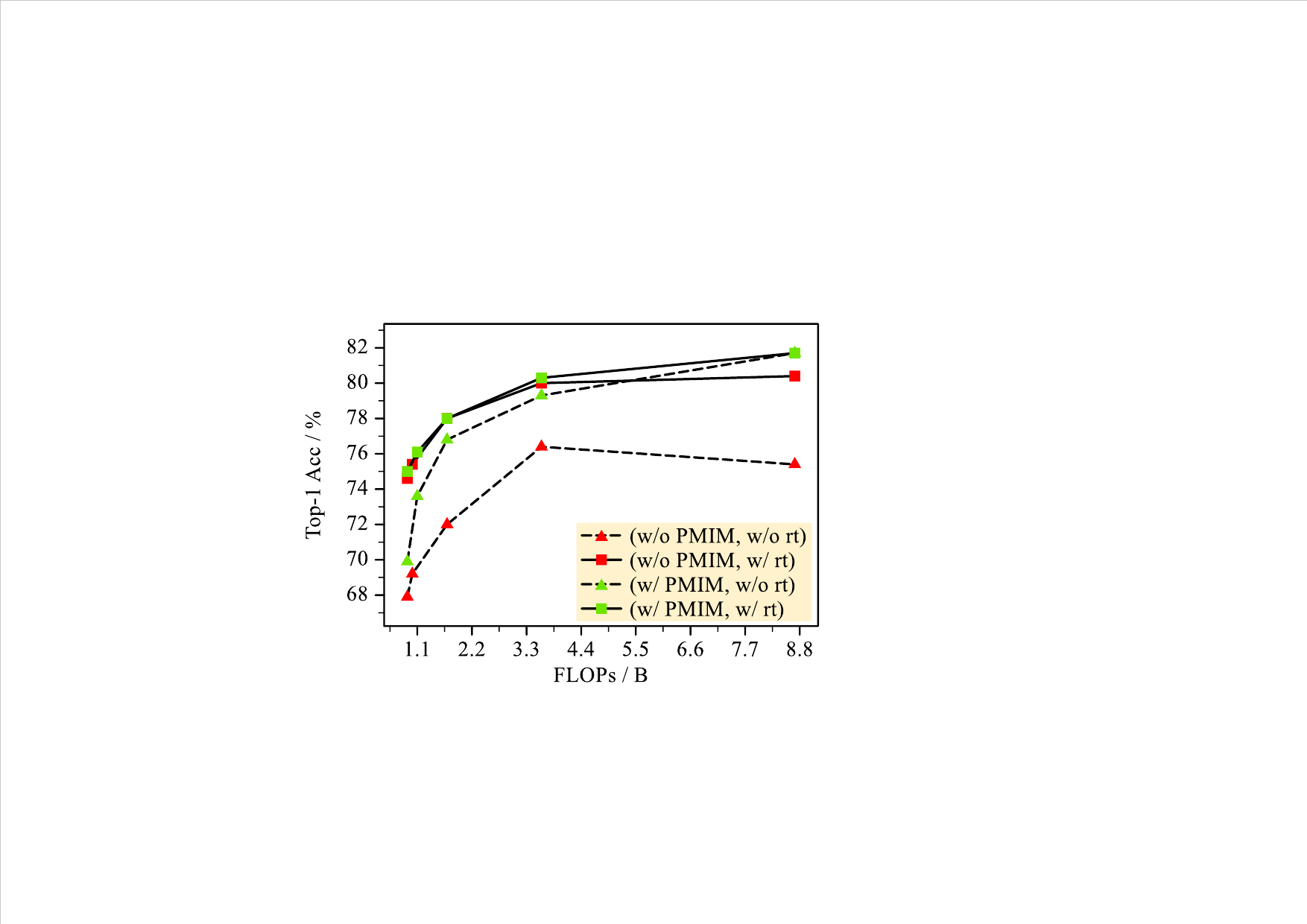}}
    \vspace{-1em}
    \caption{Performance of the searched DeiT models with/without retraining by employing/not employing PMIM during searching.}
    \vspace{-3em}
    \label{fig:PMIM}
\end{figure}

\vspace{-0.5em}
\subsection{Results on ImageNet}
\vspace{-0.5em}
We summarize the main results of the DeiT family in Table \ref{deit_imagenet} and compare them with various existing TAS and TP methods. For a fair comparison, we refer to the results of other works from official papers. It can be observed that, in general, the compressed models based on OFB achieve higher accuracy and larger compression ratios. On low-accuracy regimes, OFB achieves the highest accuracy and compression ratio for DeiT-S among all compression methods and lightweight models, obtaining at most 2.6\% accuracy increase than TAS-based methods with $\sim$80\% FLOPs reduction. On high-accuracy regimes, the compressed DeiT-B based on OFB also outperforms both TAS- and TP-based methods. In particular, OFB achieves 0.9\% higher Top-1 than WDPruning \cite{wdpruning}, and controls the accuracy decrease at 0.1\% Top-1 with 50\% reductions in parameters and FLOPs. The other compression results also validate the consistent effectiveness and versatility of our method in different DeiT models. As for search efficiency, we report the search time cost of several TAS- and TP-based methods in the last column of Table \ref{deit_imagenet} for comparisons. Compared with AutoFormer \cite{autoformer} and ViTAS \cite{vitas}, OFB reduces search time greatly at various budgets, meanwhile achieving comparable or better performance. Compared with the efficient compression methods, \eg, TF-TAS \cite{tftas} and ViT-Slim \cite{vit_slim}, OFB can achieve comparable within similar search time. One thing to note is that for the compressed DeiT-B-8.7B, after searching, it already reaches the same high performance as the retrained one. As visualized in Fig. \ref{fig:PMIM}, we compare the model performance with and without retraining process by employing or not employing PMIM. It is observed that the performance gap of the models with PMIM before and after retraining is significantly smaller than that of the models without PMIM. Therefore, when considering the cost induced by retraining, OFB is more efficient than TF-TAS and ViT-Slim, saving the retraining cost meanwhile maintaining high performance.

\begin{table}[t]
\vspace{-2em}
\small
\centering
\tablestyle{2pt}{1.}
\resizebox{\linewidth}{!}{
\begin{tabular}{lcccccc}
\toprule
 \multirow{2}{*}{Model} & \#Param & FLOPs & Top-1 & Top-5 & GPU \\ 
 & (M)  & (B) & (\%) &  (\%) & Days \\\midrule
 Swin-Ti \cite{swin} & 28.3 & 4.5 & 81.3 & 95.5 & -\\
 \rowcolor[gray]{0.9}OFB & 6.1 ($\downarrow$78\%) & 1.0 ($\downarrow$78\%)& 76.5 ($\downarrow$4.8)& 93.1 ($\downarrow$2.4) & 1.1\\
 \rowcolor[gray]{0.9}OFB & 16.4 ($\downarrow$42\%) & 2.6 ($\downarrow$42\%)& 79.9 ($\downarrow$1.4)& 94.6 ($\downarrow$0.9) & 1.3\\\midrule
  ViT-Slim \cite{vit_slim} & 19.4 ($\downarrow$31\%)& 3.4 ($\downarrow$24\%)& 80.7 ($\downarrow$0.6)& 95.4 ($\downarrow$0.1) & -\\
 \rowcolor[gray]{0.9}OFB & 18.9 ($\downarrow$33\%) & 3.1 ($\downarrow$31\%)& 80.5 ($\downarrow$0.8)& 94.8 ($\downarrow$0.7) & 1.4\\ \bottomrule
\end{tabular}}
\vspace{-1em}
\caption{Compression results of Swin-Ti \cite{swin} on ImageNet-1K.}
\label{swin_imagenet}
\vspace{-1em}
\end{table}

\begin{table}[t]
\small
\centering
\tablestyle{2pt}{1.}
\resizebox{\linewidth}{!}{
\begin{tabular}{lccc|lcccc}
\multirow{2}{*}{Model} & F. & C10 & C100 & \multirow{2}{*}{Model} & F. & mIoU & mAcc & aAcc \\ 
 & (B) & (\%) & (\%) & & (B) & (\%) & (\%) &  (\%) \\\shline
DeiT-S \cite{deit} & 4.6 & 98.6 & 87.8 & SETR-S \cite{setr} & 4.6 & 73.0 & 81.4 & 95.1 \\
\rowcolor[gray]{0.9}OFB & 1.7 & 98.7 & 88.4 & OFB & 3.6 & 73.9 & 83.1 & 95.2 \\
\end{tabular}}
\vspace{-1em}
\caption{Performance of DeiT-S \cite{deit} and compressed models on CIFAR-10 (C10), CIFAR-100 (C100) \cite{cifar} and Cityscape \cite{cityscape}.}
\label{deit_cifar}
\vspace{-2em}
\end{table}

As for the compression of Swin Transformer \cite{swin}, we choose Swin-Ti as the baseline model to validate the effectiveness of OFB. The results are listed in Table \ref{swin_imagenet}. From the table, it can be observed that the compression performance of Swin-Ti is similar to that of DeiT-S. For example, when the model is compressed with nearly 80\% reductions in FLOPs and parameters (Line 7 in Table \textcolor{red}{2} and Line 2 in Table \ref{swin_imagenet}), they both drop 4.8\% in Top-1 accuracy, while Swin-Ti drops 0.3\% less than DeiT-S in Top-5 accuracy, which further validates the effectiveness of OFB in different ViT structures. 

\begin{table}[t]
\vspace{-2em}
\centering
\subfloat[
\textbf{Weight-sharing Strategy}.
\label{tab:weight_sharing}
]{
\begin{minipage}{0.47\linewidth}{\begin{center}
\tablestyle{1pt}{1.0}
\begin{tabular}{y{28}x{18}x{18}x{18}x{18}}
Case & T1 & T5 & F. & P. \\
\shline
Ordinal & 75.8 & 92.6 & 1.2 & 5.7 \\
Bi-mask & \baseline{\textbf{76.1}} & \baseline{\textbf{92.8}} &\baseline{\textbf{1.1}} & \baseline{\textbf{5.3}}\\
& & &
\end{tabular}
\end{center}}\end{minipage}
}
\hspace{.5em}
\subfloat[
\textbf{Masking Stategy}.
\label{tab:masking_strategy}
]{
\centering
\begin{minipage}{0.45\linewidth}{\begin{center}
\tablestyle{1pt}{1.0}
\begin{tabular}{y{26}x{18}x{18}x{18}x{18}}
Case & T1 & T5 & F. & P.\\
\shline
None & 75.4 & 92.5 & \textbf{1.0} & \textbf{5.1}\\
Cons. & 75.8 & 92.8 & 1.1 & 5.2\\
PMIM & \baseline{\textbf{76.1}} & \baseline{\textbf{92.8}} & \baseline{1.1} &\baseline{5.3}\\
\end{tabular}
\end{center}}\end{minipage}
}
\hspace{1em}
\\
\centering
\subfloat[\textbf{Combinatorial Contributions}.\label{tab:component}]{
\begin{minipage}{\linewidth}
    \begin{center}
        \tablestyle{1pt}{1.}
        \resizebox{0.88\linewidth}{!}{
            \begin{tabular}{y{30}|x{20}x{20}|x{20}x{20}|x{25}x{25}x{25}x{25}}
            \multirow{2}{*}{Case}             & \multicolumn{2}{c|}{Bi-mask} & \multicolumn{2}{c|}{PMIM} & \multirow{2}{*}{\begin{tabular}{c}Top-1\\ (\%)\end{tabular}}       & \multirow{2}{*}{\begin{tabular}{c}Top-5\\ (\%)\end{tabular}}      & \multirow{2}{*}{\begin{tabular}{c}FLOPs\\ (B)\end{tabular}}       & \multirow{2}{*}{\begin{tabular}{c}\#Param\\ (M)\end{tabular}}    \\ 
                           & w/o          & w/          & w/o          & w/           &      &        &        &       \\ \shline
            Baseline       & \checkmark         &                      & \checkmark      &             & 67.5        & 88.2        & 1.18         & 5.7        \\
            \multirow{2}{*}{Bi-mask}        &                      & \multirow{2}{*}{\checkmark}         & \multirow{2}{*}{\checkmark}      &                  & 69.2  & 89.2 & 1.04 & 5.1 \\
                           & & & & & (+1.7) & (+1.0) & (-0.14) & (-0.6)\\
            \rowcolor[gray]{0.9} Bi-mask &         &        &                   &   & 72.8  & 91.4  & 1.09  & 5.3  \\ 
            \baseline{+ PMIM} & \baseline{} & \baseline{\multirow{-2}{*}{\checkmark}} & \baseline{}& \baseline{\multirow{-2}{*}{\checkmark}} &\baseline{(+3.6)} & \baseline{(+2.2)} & \baseline{(+0.05)} & \baseline{(+0.2)} \\
            \end{tabular}}
    \end{center}
\end{minipage}
}
\\
\centering
\subfloat[
\textbf{Regularization Strategy}.
\label{tab:regularization}
]{
\begin{minipage}{\linewidth}{\begin{center}
\tablestyle{1pt}{1.}
\resizebox{0.88\linewidth}{!}{
\begin{tabular}{x{30}x{30}x{30}|x{30}x{30}x{30}x{30}}
\multicolumn{3}{c|}{Regularization Strategy} & \multirow{2}{*}{\begin{tabular}{c@{}}Top-1\\ (\%)\end{tabular}}       & \multirow{2}{*}{\begin{tabular}{c@{}}Top-5\\ (\%)\end{tabular}}      & \multirow{2}{*}{\begin{tabular}{c@{}}FLOPs\\ (B)\end{tabular}}       & \multirow{2}{*}{\begin{tabular}{c@{}}\#Param\\ (M)\end{tabular}}    \\
$\mathcal{H}(\alpha)$ & $\varPsi(\alpha)$ & $\mathcal{L}_\mathcal{S}$ &  &  &  &  \\ \shline
\checkmark &  &  & 76.1 & 92.9 & 1.2 & 5.6 \\
 & \checkmark &  & 74.1 & 91.7 & 0.8 & 4.0 \\
 &  & \checkmark & 74.8 & 91.9 & 1.0 & 4.9 \\
 & \checkmark & \checkmark & 74.3 & 91.9 & \textbf{0.8} & \textbf{4.0} \\
\checkmark &  & \checkmark & \textbf{76.3} & \textbf{93.0} & 1.2 & 5.6 \\
\checkmark & \checkmark &  & 75.6 & 92.6 & 1.1 & 5.2 \\
\rowcolor[gray]{0.9}\checkmark & \checkmark & \checkmark & 76.1 & 92.8 & 1.1 & 5.3\\
\end{tabular}}
\end{center}}\end{minipage}
}
\vspace{-1em}
\caption{\textbf{Ablation studies} with DeiT-S on ImageNet. We report retraining accuracy with target sparsity as 1BFLOPs except for (c), where accuracy w/o retraining is reported to analyze the component contribution in search process. Default settings are marked in gray.}
\label{tab:ablations}
\vspace{-0.5em}
\end{table}

\begin{table}[t]
\vspace{-0.5em}
\centering
\small
\resizebox{1.\linewidth}{!}{
\tablestyle{1pt}{1.05}
\setlength\tabcolsep{2.5pt}
\begin{tabular}{lcc|lcc}
DeiT-S        & \begin{tabular}[c]{@{}c@{}}Throughput\\ (img/s)\end{tabular} & \begin{tabular}[c]{@{}c@{}}Latency\\ (ms)\end{tabular} & DeiT-B       & \begin{tabular}[c]{@{}c@{}}Throughput\\ (img/s)\end{tabular} & \begin{tabular}[c]{@{}c@{}}Latency\\ (ms)\end{tabular} \\ \shline
ViTAS-B       & 2637   & 74    & Auto-S & 735   & 503 \\
\baseline{OFB-1.0B}      & \baseline{3008 (2.6x)}  & \baseline{50 (3.7x)} & ViTAS-F      & 762   & 153 \\
Auto-Ti & 1808   & 183  & DeiT-S  & 1011    & 183  \\
ViTAS-C       & 2712   & 85   & \baseline{OFB-3.6B}     & \baseline{1152 (3.7x)}  & \baseline{252 (3.9x)} \\
DeiT-Ti       & 2613   & 61   & DeiT-B       & 313   & 982 \\
\baseline{OFB-1.1B}      & \baseline{2737 (2.4x)}   & \baseline{57 (3.2x)}    & Auto-B & 357  & 1068  \\
\baseline{OFB-1.7B}     & \baseline{1996 (2.0x)}   & \baseline{81 (2.3x)}    & \baseline{OFB-8.7B}     & \baseline{567 (1.8x)}  & \baseline{741 (1.3x)} \\
\end{tabular}}
\vspace{-1em}
\caption{Throughput and latency results of searched models. `Auto' refers to AutoFormer \cite{autoformer}. The number suffixed with ‘x’ in parentheses denotes the acceleration multiple relative to the original model.}
\label{tab:infer}
\vspace{-1.8em}
\end{table}

\vspace{-1.5em}
\subsection{Transfer Learning Results}
\vspace{-0.5em}
To evaluate the generalization ability of the compressed models by OFB, we further fine-tune the compressed models on downstream datasets, \eg, CIFAR-10, CIFAR-100 \cite{cifar} for image classification, and Cityscape \cite{cityscape} for semantic segmentation. Specifically, we choose DeiT-S \cite{deit} and SETR-DeiT-S \cite{setr} as our baselines. The hyper-parameter setting follows the official fine-tuning strategy in DeiT and SETR \cite{setr}. As presented in Table \ref{deit_cifar}, the compressed models significantly outperform baselines in both accuracy and computation costs, indicating a good generalization ability to downstream datasets for the compressed models by OFB.


\vspace{-0.5em}
\subsection{Ablation Study}
\vspace{-0.5em}
\paragraph{Effectiveness of Weight-sharing Strategy.}
The weight-sharing strategy decides both importance and sparsity distributions, as discussed above. To validate the effectiveness of our bi-mask weight-sharing strategy, we compare the search performance with the ordinal-sharing strategy employed in previous TAS methods and present results in Table \ref{tab:weight_sharing}. From the table, our method obtains more compact models with higher performance than the counterpart, indicating the bi-mask scheme can evaluate unit prunability more accurately. Considering the weight-sharing strategy is only employed at the search stage, we further report the performance gain at the search stage in Table \ref{tab:component}. Compared with the baseline, the bi-mask weight-sharing strategy achieves 1.7\% performance gain, meanwhile reducing 0.14 BFLOPs and 0.6 MParams.

\begin{figure}[t]
\vspace{-2em}
    \centering
    \small
    \subfloat[Learning process of the bi-mask for an MLP layer.]{
    \resizebox{\linewidth}{!}{
    \includegraphics[height=90pt, width=300pt]{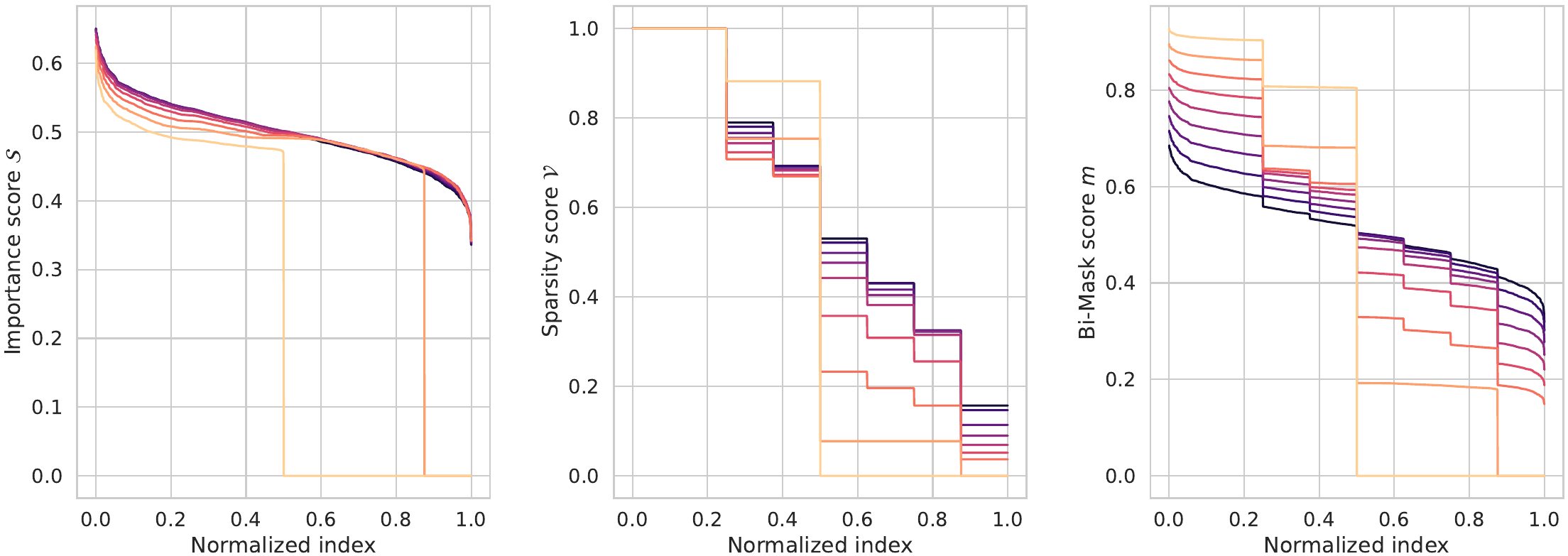}}
    \label{fig:channel_mask}}
    \\
    \subfloat[Learning process of the bi-mask for an MHSA layer.]{
    \resizebox{\linewidth}{!}{
    \includegraphics{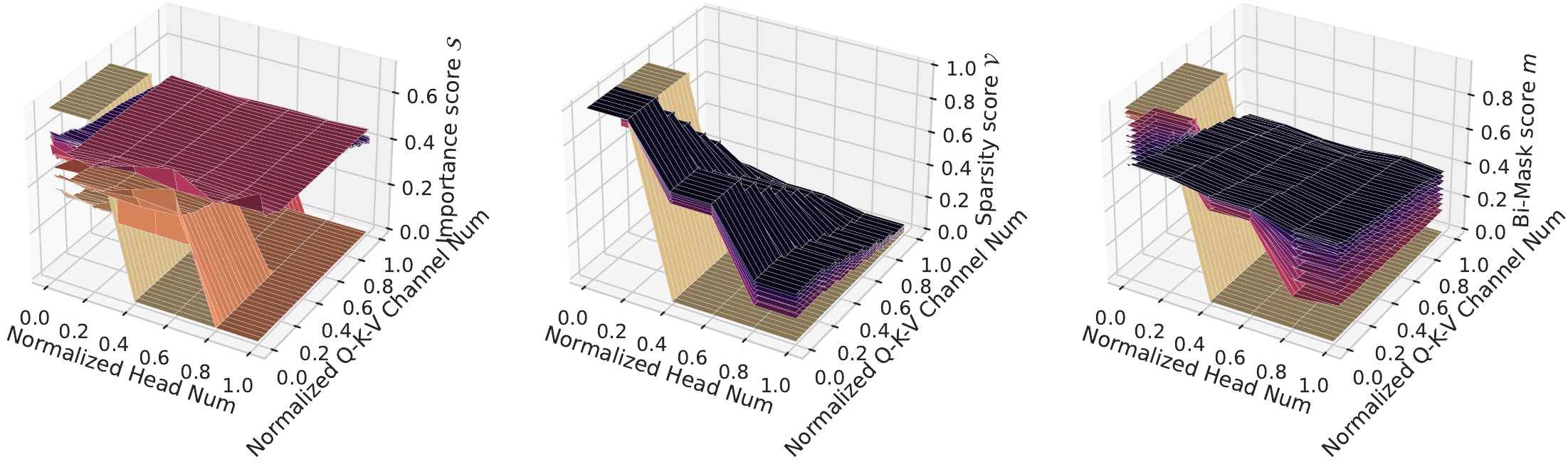}}
    \label{fig:head_mask}}
    \vspace{-1em}
    \caption{Visualization of bi-mask search process. Each line/surface is a descendingly-ordered distribution learned after one-epoch search, with the lighter color denoting a later learned distribution.}
    \label{vis_score}
    \vspace{-2em}
\end{figure}


\paragraph{Effectiveness of Regularization Strategy.}
There are three components for the bi-mask optimization, including the $\ell_1$ regularization $\mathcal{L}_\mathcal{S}$ for importance scores $\mathcal{S}$, the entropy $\mathcal{H}(\alpha)$, and variation regularization $\varPsi(\alpha)$ for sparsity scores $\mathcal{V}$. We test their individual and combinatorial contributions to the search performance, and present results in Table \ref{tab:regularization}. It can be concluded that the entropy regularization $\mathcal{H}(\alpha)$ and the $\ell_1$ regularization $\mathcal{L}_\mathcal{S}$ play the more important role in maintaining high performance, while the variation regularization $\varPsi(\alpha)$ drives the model to approach the target sparsity more closely. (See Appendix \textcolor{red}{B} for theoretical analysis)

\paragraph{Effectiveness of MIM Strategy.}
The MIM effectiveness has been partially demonstrated in Fig. \ref{fig:PMIM}. For clear comparisons, we report search performance with PMIM before retraining in Table \ref{tab:component}. From the table, PMIM can further improve accuracy by 3.6\% Top-1 with a slight increase in computation overhead (+0.05 BFLOPs, +0.2 MParams). In addition, we further analyze the progressive masking strategy in PMIM. As shown in Table \ref{tab:masking_strategy}, we compare PMIM with a constant masking scheme (const.) adopted in \cite{mae, simmim}, which keeps masking 25\% patches as aligned with the final masking ratio in PMIM. It is observed that PMIM improves accuracy at the same FLOPs, thus further validating its effectiveness.

\vspace{-0.5em}
\subsection{Inference Speedup Results}
\vspace{-0.5em}
The throughput and latency results of compressed models are listed in Table \ref{tab:infer}, where the model checkpoints are obtained from official repositories. The GPU throughput is measured on a Tesla V100 GPU with a batch size of 1024, and the latency is measured on Intel(R) Xeon(R) Gold 5218 CPU with one batch size. It shows that OFB achieves superior speedup on different devices. Specifically, the throughput of the compressed DeiT models is accelerated by 1.8x$\sim$3.7x on GPU, while the latency on CPU is reduced by 25\%$\sim$ 74\%.

\vspace{-0.5em}
\subsection{Mask Visualization}
\vspace{-0.5em}
The search process of bi-masks is visualized in Fig. \ref{vis_score} for better understanding, where we take the compressed DeiT-S-1B as an example. It is observed that in the early stage, the bi-mask score (right) is closer to the importance score (left), and the sparsity score distribution changes little, thus the initial learning focuses more on importance evaluation. Whereas, with search going on, the bi-mask score distribution gets progressively closer to the sparsity score distribution, indicating the bi-mask learning focuses more on the sparsity evaluation. As for the pruning, we can find that those units with scores in the small step intervals will be integrated into one step interval, and will be pruned if the sparsity scores drop significantly, which is performed in an adaptive manner.

\begin{figure}[t]
\vspace{-2em}
    \centering
    \small
    \subfloat[MHSAs @1.7BFLOPs.]{
    \begin{minipage}{0.49\linewidth}{\begin{center}
    \resizebox{\linewidth}{!}{\includegraphics{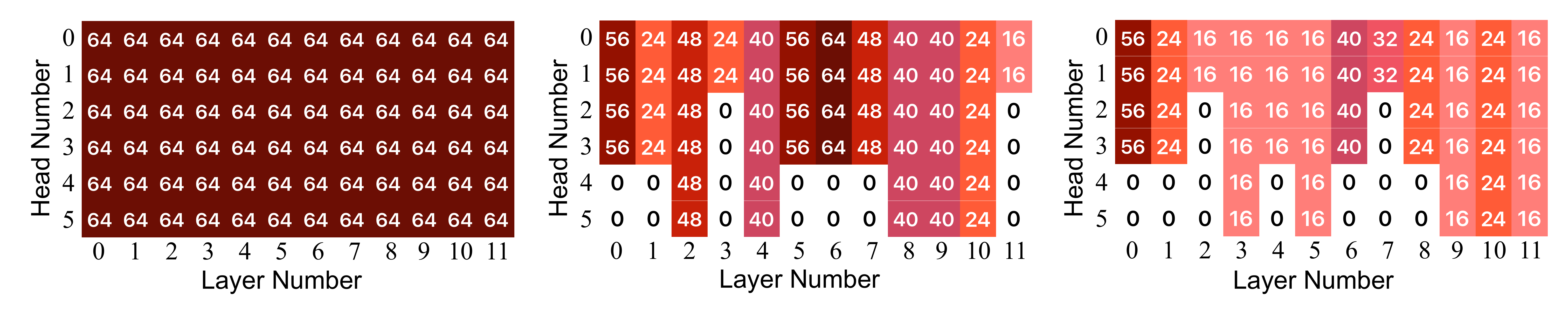}}\end{center}}\end{minipage}
    \label{fig:exp4_attn}}
    \subfloat[MHSAs @1.1BFLOPs.]{
    \begin{minipage}{0.49\linewidth}{\begin{center}
    \resizebox{\linewidth}{!}{\includegraphics{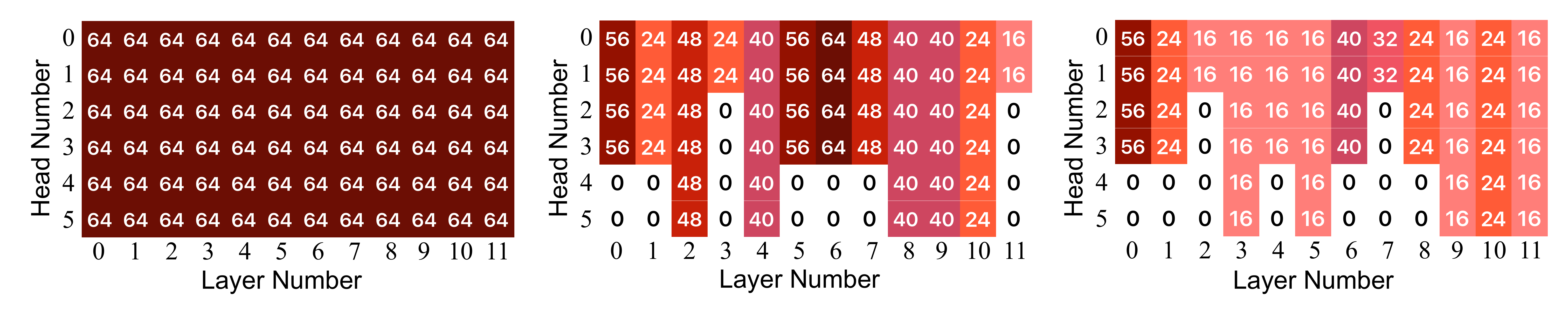}}\end{center}}\end{minipage}
    \label{fig:exp2_attn}}
    \\
    \subfloat[Layerwise MLP dimensions.]{
    \begin{minipage}{0.48\linewidth}{\begin{center}
    \resizebox{\linewidth}{!}{\includegraphics[width=130pt, height=75pt]{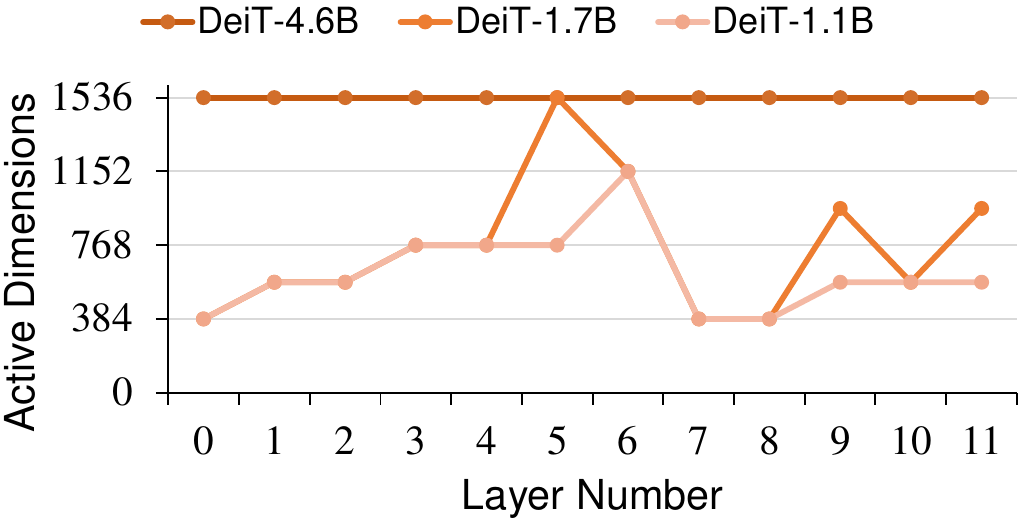}}\end{center}}\end{minipage}
    \label{fig:mlp_vis}
    }
    \subfloat[Patch Embedding dimension.]{
    \begin{minipage}{0.48\linewidth}{\begin{center}
    \resizebox{\linewidth}{!}{\includegraphics[width=130pt, height=75pt]{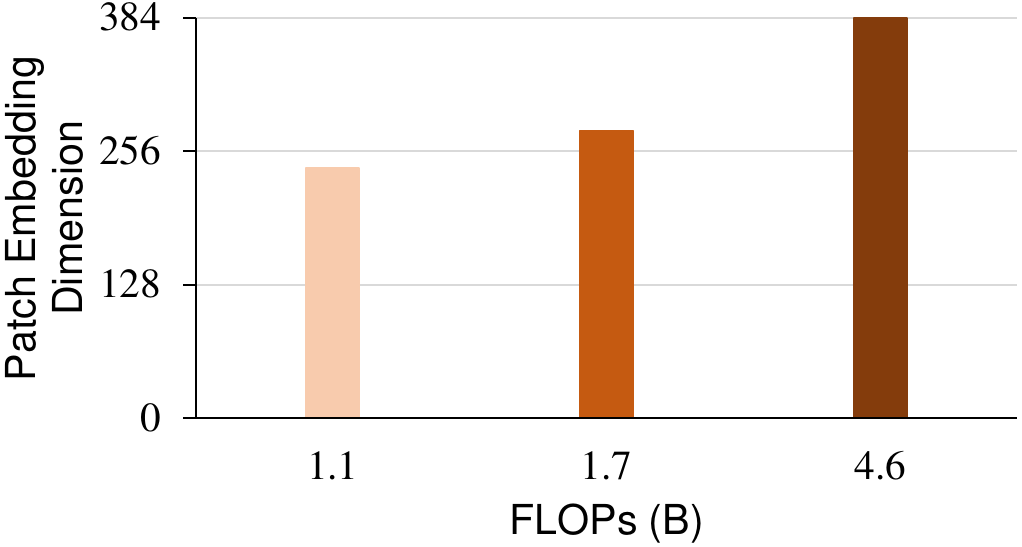}}\end{center}}\end{minipage}
    \label{fig:pe_vis}
    }
    \vspace{-1em}
    \caption{Structure configurations B\&A compression.}
    \label{arch_vis}
\vspace{-2em}
\end{figure}

\subsection{Architecture Visualization}
Fig. \ref{arch_vis} shows several searched architectures of DeiT-S. Fig. \ref{fig:exp4_attn} and \ref{fig:exp2_attn} list the searched results of MHSA modules. There are 12 MHSA layers in DeiT-S, with six heads in each layer and 64 channels in each head. The numbers inside grids denote Q-K-V channels of the head. Here, we jointly assess the prunability of Q-K-C channels for all heads at the same layer to achieve structural acceleration. It is noted that shallow and middle layers consistently need higher Q-K-V dimensions at different budgets, while the head numbers can be reduced.
Fig. \ref{fig:mlp_vis} shows the searched MLP dimensions. Similarly, MLPs are preserved more in middle layers and continually pruned in deep layers, indicating more redundance in deep layers.
Fig. \ref{fig:pe_vis} shows the searched Patch Embedding dimensions, which are carefully searched with more channels preserved. It is reasonable since Patch Embedding is skip-connected to all encoder blocks in DeiT.

\section{Conclusion}
\label{sec:conclusion}
We introduce OFB to tackle the VTC problem. To determine the unit prunability in ViTs, for the first time, OFB explores how to entangle the importance and sparsity scores during search. And a PMIM regularization strategy is specially designed for the dimension-reduced feature space in VTC. Extensive experiments have been conducted to compress various ViTs on ImageNet and downstream benchmarks, indicating an excellent compression capability for ViTs.

\section{Acknowledgements}
The research was supported by National Key R\&D Program of China (Grant No. 2022ZD0160104), National Natural Science Foundation of China (No. 62071127, and
62101137), National Key Research and Development Program of China (No. 2022ZD0160100), Shanghai Natural Science Foundation (No. 23ZR1402900), the Science and Technology Commission of Shanghai Municipality (Grant No. 22DZ1100102), and Shanghai Rising Star Program (Grant No. 23QD1401000).
{
    \small
    \bibliographystyle{ieeenat_fullname}
    \bibliography{main}

\begin{thebibliography}{51}
\providecommand{\natexlab}[1]{#1}
\providecommand{\url}[1]{\texttt{#1}}
\expandafter\ifx\csname urlstyle\endcsname\relax
  \providecommand{\doi}[1]{doi: #1}\else
  \providecommand{\doi}{doi: \begingroup \urlstyle{rm}\Url}\fi

\bibitem[Bai et~al.(2022)Bai, Wang, Xiao, Wei, Wang, Yuille, Zhou, and Xie]{mim_distill}
Yutong Bai, Zeyu Wang, Junfei Xiao, Chen Wei, Huiyu Wang, Alan Yuille, Yuyin Zhou, and Cihang Xie.
\newblock Masked autoencoders enable efficient knowledge distillers.
\newblock \emph{arXiv preprint arXiv:2208.12256}, 2022.

\bibitem[Chavan et~al.(2022)Chavan, Shen, Liu, Liu, Cheng, and Xing]{vit_slim}
Arnav Chavan, Zhiqiang Shen, Zhuang Liu, Zechun Liu, Kwang-Ting Cheng, and Eric~P Xing.
\newblock Vision transformer slimming: Multi-dimension searching in continuous optimization space.
\newblock In \emph{Proceedings of the IEEE/CVF Conference on Computer Vision and Pattern Recognition}, pages 4931--4941, 2022.

\bibitem[Chefer et~al.(2021)Chefer, Gur, and Wolf]{te}
Hila Chefer, Shir Gur, and Lior Wolf.
\newblock Transformer interpretability beyond attention visualization.
\newblock In \emph{Proceedings of the IEEE/CVF Conference on Computer Vision and Pattern Recognition (CVPR)}, pages 782--791, 2021.

\bibitem[Chen et~al.(2021{\natexlab{a}})Chen, Li, Li, Li, Bai, Lin, Sun, Yan, and Ouyang]{glit}
Boyu Chen, Peixia Li, Chuming Li, Baopu Li, Lei Bai, Chen Lin, Ming Sun, Junjie Yan, and Wanli Ouyang.
\newblock Glit: Neural architecture search for global and local image transformer.
\newblock In \emph{Proceedings of the IEEE/CVF International Conference on Computer Vision (ICCV)}, pages 12--21, 2021{\natexlab{a}}.

\bibitem[Chen et~al.(2021{\natexlab{b}})Chen, Peng, Fu, and Ling]{autoformer}
Minghao Chen, Houwen Peng, Jianlong Fu, and Haibin Ling.
\newblock Autoformer: Searching transformers for visual recognition.
\newblock In \emph{Proceedings of the IEEE/CVF international conference on computer vision}, pages 12270--12280, 2021{\natexlab{b}}.

\bibitem[Chen et~al.(2021{\natexlab{c}})Chen, Wu, Ni, Peng, Liu, Fu, Chao, and Ling]{sss}
Minghao Chen, Kan Wu, Bolin Ni, Houwen Peng, Bei Liu, Jianlong Fu, Hongyang Chao, and Haibin Ling.
\newblock Searching the search space of vision transformer.
\newblock \emph{Advances in Neural Information Processing Systems}, 34:\penalty0 8714--8726, 2021{\natexlab{c}}.

\bibitem[Chen et~al.(2021{\natexlab{d}})Chen, Cheng, Gan, Yuan, Zhang, and Wang]{s2vite}
Tianlong Chen, Yu Cheng, Zhe Gan, Lu Yuan, Lei Zhang, and Zhangyang Wang.
\newblock Chasing sparsity in vision transformers: An end-to-end exploration.
\newblock \emph{Advances in Neural Information Processing Systems}, 34:\penalty0 19974--19988, 2021{\natexlab{d}}.

\bibitem[Chen et~al.(2019)Chen, Xie, Wu, and Tian]{pdarts}
Xin Chen, Lingxi Xie, Jun Wu, and Qi Tian.
\newblock Progressive differentiable architecture search: Bridging the depth gap between search and evaluation.
\newblock In \emph{Proceedings of the IEEE/CVF international conference on computer vision}, pages 1294--1303, 2019.

\bibitem[Chu et~al.(2021)Chu, Tian, Wang, Zhang, Ren, Wei, Xia, and Shen]{twins}
Xiangxiang Chu, Zhi Tian, Yuqing Wang, Bo Zhang, Haibing Ren, Xiaolin Wei, Huaxia Xia, and Chunhua Shen.
\newblock Twins: Revisiting the design of spatial attention in vision transformers.
\newblock \emph{Advances in Neural Information Processing Systems}, 34:\penalty0 9355--9366, 2021.

\bibitem[Cordts et~al.(2016)Cordts, Omran, Ramos, Rehfeld, Enzweiler, Benenson, Franke, Roth, and Schiele]{cityscape}
Marius Cordts, Mohamed Omran, Sebastian Ramos, Timo Rehfeld, Markus Enzweiler, Rodrigo Benenson, Uwe Franke, Stefan Roth, and Bernt Schiele.
\newblock The cityscapes dataset for semantic urban scene understanding.
\newblock In \emph{Proceedings of the IEEE Conference on Computer Vision and Pattern Recognition (CVPR)}, 2016.

\bibitem[Deng et~al.(2009)Deng, Dong, Socher, Li, Li, and Fei-Fei]{imagenet}
Jia Deng, Wei Dong, Richard Socher, Li-Jia Li, Kai Li, and Li Fei-Fei.
\newblock Imagenet: A large-scale hierarchical image database.
\newblock In \emph{2009 IEEE Conference on Computer Vision and Pattern Recognition}, pages 248--255, 2009.

\bibitem[Dosovitskiy et~al.()Dosovitskiy, Beyer, Kolesnikov, Weissenborn, Zhai, Unterthiner, Dehghani, Minderer, Heigold, Gelly, et~al.]{vit}
Alexey Dosovitskiy, Lucas Beyer, Alexander Kolesnikov, Dirk Weissenborn, Xiaohua Zhai, Thomas Unterthiner, Mostafa Dehghani, Matthias Minderer, Georg Heigold, Sylvain Gelly, et~al.
\newblock An image is worth 16x16 words: Transformers for image recognition at scale.
\newblock In \emph{International Conference on Learning Representations}.

\bibitem[Fang et~al.(2023)Fang, Ma, Song, Mi, and Wang]{Fang_2023_CVPR}
Gongfan Fang, Xinyin Ma, Mingli Song, Michael~Bi Mi, and Xinchao Wang.
\newblock Depgraph: Towards any structural pruning.
\newblock In \emph{Proceedings of the IEEE/CVF Conference on Computer Vision and Pattern Recognition (CVPR)}, pages 16091--16101, 2023.

\bibitem[Fang et~al.(2022)Fang, Yang, Wang, Ge, Shan, and Wang]{mimdet}
Yuxin Fang, Shusheng Yang, Shijie Wang, Yixiao Ge, Ying Shan, and Xinggang Wang.
\newblock Unleashing vanilla vision transformer with masked image modeling for object detection.
\newblock \emph{arXiv preprint arXiv:2204.02964}, 2022.

\bibitem[Gong et~al.(2021)Gong, Wang, Li, Chen, Yan, Tian, Chandra, et~al.]{nasvit}
Chengyue Gong, Dilin Wang, Meng Li, Xinlei Chen, Zhicheng Yan, Yuandong Tian, Vikas Chandra, et~al.
\newblock Nasvit: Neural architecture search for efficient vision transformers with gradient conflict aware supernet training.
\newblock In \emph{International Conference on Learning Representations}, 2021.

\bibitem[Guo et~al.(2022)Guo, Guo, Shi, Chen, He, Chen, and Yu]{alpha_darts}
Bicheng Guo, Shuxuan Guo, Miaojing Shi, Peng Chen, Shibo He, Jiming Chen, and Kaicheng Yu.
\newblock $\alpha $ darts once more: Enhancing differentiable architecture search by masked image modeling.
\newblock \emph{arXiv preprint arXiv:2211.10105}, 2022.

\bibitem[Guo et~al.(2019)Guo, Zheng, Tan, Chen, Chen, Zhao, and Huang]{nat}
Yong Guo, Yin Zheng, Mingkui Tan, Qi Chen, Jian Chen, Peilin Zhao, and Junzhou Huang.
\newblock Nat: Neural architecture transformer for accurate and compact architectures.
\newblock \emph{Advances in Neural Information Processing Systems}, 32, 2019.

\bibitem[Guo et~al.(2020)Guo, Zhang, Mu, Heng, Liu, Wei, and Sun]{spos}
Zichao Guo, Xiangyu Zhang, Haoyuan Mu, Wen Heng, Zechun Liu, Yichen Wei, and Jian Sun.
\newblock Single path one-shot neural architecture search with uniform sampling.
\newblock In \emph{Computer Vision--ECCV 2020: 16th European Conference, Glasgow, UK, August 23--28, 2020, Proceedings, Part XVI 16}, pages 544--560. Springer, 2020.

\bibitem[Han et~al.(2021)Han, Xiao, Wu, Guo, Xu, and Wang]{tnt}
Kai Han, An Xiao, Enhua Wu, Jianyuan Guo, Chunjing Xu, and Yunhe Wang.
\newblock Transformer in transformer.
\newblock \emph{Advances in Neural Information Processing Systems}, 34:\penalty0 15908--15919, 2021.

\bibitem[He et~al.(2022)He, Chen, Xie, Li, Doll{\'a}r, and Girshick]{mae}
Kaiming He, Xinlei Chen, Saining Xie, Yanghao Li, Piotr Doll{\'a}r, and Ross Girshick.
\newblock Masked autoencoders are scalable vision learners.
\newblock In \emph{Proceedings of the IEEE/CVF Conference on Computer Vision and Pattern Recognition}, pages 16000--16009, 2022.

\bibitem[Kong et~al.(2021)Kong, Dong, Ma, Meng, Sun, Niu, Shen, Yuan, Ren, Qin, et~al.]{spvit}
Zhenglun Kong, Peiyan Dong, Xiaolong Ma, Xin Meng, Mengshu Sun, Wei Niu, Xuan Shen, Geng Yuan, Bin Ren, Minghai Qin, et~al.
\newblock Spvit: Enabling faster vision transformers via soft token pruning.
\newblock \emph{arXiv preprint arXiv:2112.13890}, 2021.

\bibitem[Krizhevsky et~al.(2009)]{cifar}
Alex Krizhevsky et~al.
\newblock Learning multiple layers of features from tiny images.
\newblock 2009.

\bibitem[Li et~al.(2021)Li, Tang, Wang, Peng, Wang, Liang, and Chang]{bossnas}
Changlin Li, Tao Tang, Guangrun Wang, Jiefeng Peng, Bing Wang, Xiaodan Liang, and Xiaojun Chang.
\newblock Bossnas: Exploring hybrid cnn-transformers with block-wisely self-supervised neural architecture search.
\newblock In \emph{Proceedings of the IEEE/CVF International Conference on Computer Vision}, pages 12281--12291, 2021.

\bibitem[Liu et~al.(2018)Liu, Simonyan, and Yang]{darts}
Hanxiao Liu, Karen Simonyan, and Yiming Yang.
\newblock Darts: Differentiable architecture search.
\newblock \emph{arXiv preprint arXiv:1806.09055}, 2018.

\bibitem[Liu et~al.(2022)Liu, Huang, Song, Li, and Liu]{uninet}
Jihao Liu, Xin Huang, Guanglu Song, Hongsheng Li, and Yu Liu.
\newblock Uninet: Unified architecture search with convolution, transformer, and mlp.
\newblock In \emph{Computer Vision--ECCV 2022: 17th European Conference, Tel Aviv, Israel, October 23--27, 2022, Proceedings, Part XXI}, pages 33--49. Springer, 2022.

\bibitem[Liu et~al.(2021)Liu, Lin, Cao, Hu, Wei, Zhang, Lin, and Guo]{swin}
Ze Liu, Yutong Lin, Yue Cao, Han Hu, Yixuan Wei, Zheng Zhang, Stephen Lin, and Baining Guo.
\newblock Swin transformer: Hierarchical vision transformer using shifted windows.
\newblock In \emph{Proceedings of the IEEE/CVF international conference on computer vision}, pages 10012--10022, 2021.

\bibitem[Pan et~al.(2021{\natexlab{a}})Pan, Panda, Jiang, Wang, Feris, and Oliva]{iared}
Bowen Pan, Rameswar Panda, Yifan Jiang, Zhangyang Wang, Rogerio Feris, and Aude Oliva.
\newblock Ia-red\^{}2: Interpretability-aware redundancy reduction for vision transformers.
\newblock In \emph{Advances in Neural Information Processing Systems}, pages 24898--24911. Curran Associates, Inc., 2021{\natexlab{a}}.

\bibitem[Pan et~al.(2021{\natexlab{b}})Pan, Zhuang, Liu, He, and Cai]{hvt}
Zizheng Pan, Bohan Zhuang, Jing Liu, Haoyu He, and Jianfei Cai.
\newblock Scalable vision transformers with hierarchical pooling.
\newblock In \emph{Proceedings of the IEEE/CVF International Conference on Computer Vision (ICCV)}, pages 377--386, 2021{\natexlab{b}}.

\bibitem[Rao et~al.(2021)Rao, Zhao, Liu, Lu, Zhou, and Hsieh]{dynamicvit}
Yongming Rao, Wenliang Zhao, Benlin Liu, Jiwen Lu, Jie Zhou, and Cho-Jui Hsieh.
\newblock Dynamicvit: Efficient vision transformers with dynamic token sparsification.
\newblock \emph{Advances in neural information processing systems}, 34:\penalty0 13937--13949, 2021.

\bibitem[R{\'e}nyi(1961)]{renyi_entropy}
Alfr{\'e}d R{\'e}nyi.
\newblock On measures of entropy and information.
\newblock In \emph{Proceedings of the Fourth Berkeley Symposium on Mathematical Statistics and Probability, Volume 1: Contributions to the Theory of Statistics}, pages 547--562. University of California Press, 1961.

\bibitem[Song et~al.(2022)Song, Xu, He, Jiang, Jing, and Liang]{cpvit}
Zhuoran Song, Yihong Xu, Zhezhi He, Li Jiang, Naifeng Jing, and Xiaoyao Liang.
\newblock Cp-vit: Cascade vision transformer pruning via progressive sparsity prediction.
\newblock \emph{arXiv preprint arXiv:2203.04570}, 2022.

\bibitem[Su et~al.(2022)Su, You, Xie, Zheng, Wang, Qian, Zhang, Wang, and Xu]{vitas}
Xiu Su, Shan You, Jiyang Xie, Mingkai Zheng, Fei Wang, Chen Qian, Changshui Zhang, Xiaogang Wang, and Chang Xu.
\newblock Vitas: Vision transformer architecture search.
\newblock In \emph{Computer Vision--ECCV 2022: 17th European Conference, Tel Aviv, Israel, October 23--27, 2022, Proceedings, Part XXI}, pages 139--157. Springer, 2022.

\bibitem[Tang et~al.(2024)Tang, Lin, Ye, Ye, Yu, Li, and Chen]{tang2024enhanced}
Shengji Tang, Weihao Lin, Hancheng Ye, Peng Ye, Chong Yu, Baopu Li, and Tao Chen.
\newblock Enhanced sparsification via stimulative training.
\newblock \emph{arXiv preprint arXiv:2403.06417}, 2024.

\bibitem[Touvron et~al.(2021)Touvron, Cord, Douze, Massa, Sablayrolles, and J{\'e}gou]{deit}
Hugo Touvron, Matthieu Cord, Matthijs Douze, Francisco Massa, Alexandre Sablayrolles, and Herv{\'e} J{\'e}gou.
\newblock Training data-efficient image transformers \& distillation through attention.
\newblock In \emph{International conference on machine learning}, pages 10347--10357. PMLR, 2021.

\bibitem[Tu et~al.(2023)Tu, Ye, Lin, Ye, Yu, Chen, Li, and Ouyang]{tu2023efficient}
Chongjun Tu, Peng Ye, Weihao Lin, Hancheng Ye, Chong Yu, Tao Chen, Baopu Li, and Wanli Ouyang.
\newblock Efficient architecture search via bi-level data pruning.
\newblock \emph{arXiv preprint arXiv:2312.14200}, 2023.

\bibitem[Wang et~al.(2021)Wang, Xie, Li, Fan, Song, Liang, Lu, Luo, and Shao]{pvt}
Wenhai Wang, Enze Xie, Xiang Li, Deng-Ping Fan, Kaitao Song, Ding Liang, Tong Lu, Ping Luo, and Ling Shao.
\newblock Pyramid vision transformer: A versatile backbone for dense prediction without convolutions.
\newblock In \emph{Proceedings of the IEEE/CVF international conference on computer vision}, pages 568--578, 2021.

\bibitem[Wang et~al.(2022)Wang, Luo, WANG, Ding, Wang, and Li]{vtclfc}
Zhenyu Wang, Hao Luo, Pichao WANG, Feng Ding, Fan Wang, and Hao Li.
\newblock {VTC}-{LFC}: Vision transformer compression with low-frequency components.
\newblock In \emph{Advances in Neural Information Processing Systems}, 2022.

\bibitem[Wen et~al.(2016)Wen, Wu, Wang, Chen, and Li]{wen2016learning}
Wei Wen, Chunpeng Wu, Yandan Wang, Yiran Chen, and Hai Li.
\newblock Learning structured sparsity in deep neural networks.
\newblock \emph{Advances in neural information processing systems}, 29, 2016.

\bibitem[Xie et~al.(2022)Xie, Zhang, Cao, Lin, Bao, Yao, Dai, and Hu]{simmim}
Zhenda Xie, Zheng Zhang, Yue Cao, Yutong Lin, Jianmin Bao, Zhuliang Yao, Qi Dai, and Han Hu.
\newblock Simmim: A simple framework for masked image modeling.
\newblock In \emph{Proceedings of the IEEE/CVF Conference on Computer Vision and Pattern Recognition}, pages 9653--9663, 2022.

\bibitem[Ye et~al.(2023)Ye, Zhang, Chen, Fan, and Wang]{pagcp}
Hancheng Ye, Bo Zhang, Tao Chen, Jiayuan Fan, and Bin Wang.
\newblock Performance-aware approximation of global channel pruning for multitask cnns.
\newblock \emph{IEEE Transactions on Pattern Analysis and Machine Intelligence}, 2023.

\bibitem[Ye et~al.(2022)Ye, Li, Li, Chen, Fan, and Ouyang]{ye2022beta}
Peng Ye, Baopu Li, Yikang Li, Tao Chen, Jiayuan Fan, and Wanli Ouyang.
\newblock $\beta$-darts: Beta-decay regularization for differentiable architecture search.
\newblock In \emph{2022 IEEE/CVF Conference on Computer Vision and Pattern Recognition (CVPR)}, pages 10864--10873. IEEE, 2022.

\bibitem[Yu et~al.(2022{\natexlab{a}})Yu, Huang, Wang, Cheng, Chu, and Cui]{wdpruning}
Fang Yu, Kun Huang, Meng Wang, Yuan Cheng, Wei Chu, and Li Cui.
\newblock Width \& depth pruning for vision transformers.
\newblock In \emph{Proceedings of the AAAI Conference on Artificial Intelligence}, pages 3143--3151, 2022{\natexlab{a}}.

\bibitem[Yu et~al.(2022{\natexlab{b}})Yu, Chen, Shen, Yuan, Tan, Yang, Liu, and Wang]{uvc}
Shixing Yu, Tianlong Chen, Jiayi Shen, Huan Yuan, Jianchao Tan, Sen Yang, Ji Liu, and Zhangyang Wang.
\newblock Unified visual transformer compression.
\newblock In \emph{International Conference on Learning Representations}, 2022{\natexlab{b}}.

\bibitem[Yun et~al.(2019)Yun, Han, Oh, Chun, Choe, and Yoo]{cutmix}
Sangdoo Yun, Dongyoon Han, Seong~Joon Oh, Sanghyuk Chun, Junsuk Choe, and Youngjoon Yoo.
\newblock Cutmix: Regularization strategy to train strong classifiers with localizable features.
\newblock In \emph{Proceedings of the IEEE/CVF international conference on computer vision}, pages 6023--6032, 2019.

\bibitem[Zhang et~al.(2018)Zhang, Cisse, Dauphin, and Lopez-Paz]{mixup}
Hongyi Zhang, Moustapha Cisse, Yann~N Dauphin, and David Lopez-Paz.
\newblock mixup: Beyond empirical risk minimization.
\newblock \emph{International Conference on Learning Representations}, 2018.

\bibitem[Zhang et~al.(2022)Zhang, Guo, Gao, Fang, Zhao, Wang, Qiao, and Li]{m2ae}
Renrui Zhang, Ziyu Guo, Peng Gao, Rongyao Fang, Bin Zhao, Dong Wang, Yu Qiao, and Hongsheng Li.
\newblock Point-m2{AE}: Multi-scale masked autoencoders for hierarchical point cloud pre-training.
\newblock In \emph{Advances in Neural Information Processing Systems}, 2022.

\bibitem[Zheng et~al.(2021)Zheng, Lu, Zhao, Zhu, Luo, Wang, Fu, Feng, Xiang, Torr, et~al.]{setr}
Sixiao Zheng, Jiachen Lu, Hengshuang Zhao, Xiatian Zhu, Zekun Luo, Yabiao Wang, Yanwei Fu, Jianfeng Feng, Tao Xiang, Philip~HS Torr, et~al.
\newblock Rethinking semantic segmentation from a sequence-to-sequence perspective with transformers.
\newblock In \emph{Proceedings of the IEEE/CVF conference on computer vision and pattern recognition}, pages 6881--6890, 2021.

\bibitem[Zhou et~al.(2022)Zhou, Sheng, Zheng, Li, Sun, Tian, Chen, and Ji]{tftas}
Qinqin Zhou, Kekai Sheng, Xiawu Zheng, Ke Li, Xing Sun, Yonghong Tian, Jie Chen, and Rongrong Ji.
\newblock Training-free transformer architecture search.
\newblock In \emph{Proceedings of the IEEE/CVF Conference on Computer Vision and Pattern Recognition}, pages 10894--10903, 2022.

\bibitem[Zhou et~al.(2019)Zhou, Zhang, Wang, and Tian]{rbp}
Yuefu Zhou, Ya Zhang, Yanfeng Wang, and Qi Tian.
\newblock Accelerate cnn via recursive bayesian pruning.
\newblock In \emph{Proceedings of the IEEE/CVF International Conference on Computer Vision}, pages 3306--3315, 2019.

\bibitem[Zhu et~al.(2021)Zhu, Tang, and Han]{vtp}
Mingjian Zhu, Yehui Tang, and Kai Han.
\newblock Vision transformer pruning.
\newblock \emph{arXiv preprint arXiv:2104.08500}, 2021.

\bibitem[Zong et~al.(2022)Zong, Li, Song, Wang, Qiao, Leng, and Liu]{s2vit}
Zhuofan Zong, Kunchang Li, Guanglu Song, Yali Wang, Yu Qiao, Biao Leng, and Yu Liu.
\newblock Self-slimmed vision transformer.
\newblock In \emph{Computer Vision--ECCV 2022: 17th European Conference, Tel Aviv, Israel, October 23--27, 2022, Proceedings, Part XI}, pages 432--448. Springer, 2022.

\end{thebibliography}
}

\newpage
\setcounter{page}{1}
\setcounter{equation}{0}
\setcounter{figure}{0}
\setcounter{table}{0}
\appendix

\maketitlesupplementary
\renewcommand\thefigure{A.\arabic{figure}}
\renewcommand\theequation{A.\arabic{equation}}
\renewcommand\thetable{A.\arabic{table}}

In supplementary material, we first provide a summary of all notations mentioned in the main body for a clear understanding of the paper, as shown in Table \ref{tab:notation}. Then, we make a deep analysis of the adaptive one-hot loss function, including the theoretical justification of its effectiveness (Appendix \ref{e_and_v}). In addition, we demonstrate the necessity of activating variance regularization with the tangent function from the lens of optimization space (Appendix \ref{tangent}). Then, we introduce the implementation details of our experiments on different baseline models (Appendix \ref{details}). Finally, we provide more experimental analyses (Appendix \ref{exp_vis}, \ref{inductive}, and \ref{cnn}). Our code is available at \url{https://github.com/HankYe/Once-for-Both}.

\begin{table*}[t]
\centering
\small
\begin{tabular}{x{80}y{140}||x{80}y{140}}
Symbol    & Notation & Symbol           & Notation \\ \hline
$\mathcal{N}$       & supernet                & $m$                         & bi-mask \\
$\mathcal{A}$       & search space            & $\lambda(t)$                & time-varying weight coefficient \\
$W$                 & supernet weights        & $i$                         & the submodule index \\
$\mathcal{F}_d$     & decoder                 & $j$                         & the unit index \\
$\mathcal{D}$       & dataset                 & $\alpha$                    & architecture parameters \\
$\boldsymbol{f}$    & importance criterion    & $p$                         & the normalized architecture score \\
$\mathcal{S}$       & importance score        & $\Delta$                    & search step \\
$\mathcal{V}$       & sparsity score          & $M$                         & submodule number in $\mathcal{N}$ \\
$\mathcal{L}_{val}$ & validation loss         & $\sigma$                    & measured variance of $p$ \\
$\mathcal{L}_{train}$& training loss          & $\sigma^{t}$                & target variance of $p$ \\
$\mathcal{L}_{\mathcal{S}}$ & regularization item for $\mathcal{S}$ & $\omega$ & normalized variance of $p$ \\
$\mathcal{L}_{\mathcal{V}}$ & regularization item for $\mathcal{V}$ & $\mu_1, \mu_2$ & weight coefficients in $\mathcal{L}_{\mathcal{V}}$ \\
$\mathcal{L}_{rec}$  & reconstruction loss    & $\mu_3$                & weight coefficient in $\mathcal{L}_{\mathcal{S}}$\\
$\mathcal{L}_{m}$    & regularization item for $m$ & $\bar{p}$               & mean of $p$ \\
$R(p)$               & regularization item for $p$ & $\eta$                      & scaling factor  \\
$g$                  & computation cost       & $\gamma$                    & masking ratio \\
$\tau$               & resource constraint    & $\Delta\mathrm{T}$          & pruning interval \\
\end{tabular}
\caption{Notation Summary.}
\label{tab:notation}
\end{table*}

\section{Motivation behind Entropy and Variance Regularizations}\label{e_and_v}
In this section, we take a deep dive into the design of the adaptive one-hot loss function, which targets discretizing each $p_i$ into a one-hot vector. Considering the inaccessible one-hot index in the search process (Sec. \textcolor{red}{3.3}), the optimization objective of $p_i$ should adapt to potential members in a one-hot vector group according to the search results of other submodules. For example, the sparsity target of one submodule can change from [0, 0, 1, 0] to [0, 1, 0, 0] if other pruned submodules contribute to a small computation reduction. Our method employs an adaptive one-hot loss function to learn members' invariant and unique properties within a one-hot vector group, fulfilling the optimization objectives.

First, we present two learnable properties of the one-hot vector set: entropy $\mathcal{H}(p_i)$ and variance $\sigma(p_i)$. Focusing on the $i$-th dimension in the normalized $\alpha$ using \textit{softmax}, denoted as $p_i$, we establish a theorem revealing the unique relationship between $\mathcal{H}(p_i)$, $\sigma(p_i)$, and the set of one-hot vectors.
\newtheorem{thm}{\bf Theorem}
\begin{thm}\label{thm1}

\small
Suppose $p_i\in R^{1\times D}$ and $\sum_{k=1}^{D}p_{ik}=1$, with $p_{ik} \geq 0, k=1, 2,..., D$. Then the following propositions are equivalent:
\begin{flalign}
\label{entropy}&\left( 1 \right) \,\,\mathcal{H} (p_i)=-\sum\nolimits_{k=1}^D{p_{ik}\log p_{ik}}=0;
&\\
\label{variance}&\left( 2 \right) \,\,\sigma \left( p_i \right) =\sum\nolimits_{k=1}^D{\left( p_{ik}-\bar{p} \right) ^2}/D=\left( D-1 \right) /D^2;
& \\
\label{one-hot}&\left( 3 \right) \,\,p_i\in \left\{ e_k \right\} , k=1,2,...,D,&
\end{flalign}
where $e_k$ represents the $D$-dimensional one-hot vector with the $k$-th element set to one.
\end{thm}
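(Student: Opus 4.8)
The plan is to establish the equivalence by proving the cycle of implications $(3)\Rightarrow(1)$, $(3)\Rightarrow(2)$, $(1)\Rightarrow(3)$, and $(2)\Rightarrow(3)$; since $(3)$ implies both $(1)$ and $(2)$, it suffices to additionally show that each of $(1)$ and $(2)$ forces $p_i$ to be a standard basis vector. First I would handle the two easy directions: if $p_i=e_k$, then exactly one coordinate equals $1$ and the rest are $0$, so each term $p_{ik}\log p_{ik}$ is either $1\cdot\log 1=0$ or the limiting value $0\cdot\log 0=0$, giving $\mathcal{H}(p_i)=0$; and a direct computation of $\sum_\ell (p_{i\ell}-\bar p)^2$ with $\bar p=1/D$ yields $(1-1/D)^2+(D-1)(1/D)^2=(D-1)/D$, hence $\sigma(p_i)=(D-1)/D^2$.

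For $(1)\Rightarrow(3)$ I would use strict concavity. Since the entropy function $x\mapsto -x\log x$ is continuous on $[0,1]$, nonnegative there, and equals zero only at the endpoints $x=0$ and $x=1$, the sum $\mathcal{H}(p_i)=\sum_k(-p_{ik}\log p_{ik})$ is a sum of nonnegative terms; it vanishes iff every term vanishes, i.e.\ iff every $p_{ik}\in\{0,1\}$. Combined with the constraint $\sum_k p_{ik}=1$, exactly one coordinate is $1$ and the rest are $0$, so $p_i\in\{e_k\}$.

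For $(2)\Rightarrow(3)$ I would translate the variance condition into a statement about $\sum_k p_{ik}^2$. Expanding, $\sigma(p_i)=\frac1D\big(\sum_k p_{ik}^2 - D\bar p^2\big)=\frac1D\sum_k p_{ik}^2 - \frac1{D^2}$, so the hypothesis $\sigma(p_i)=(D-1)/D^2$ is equivalent to $\sum_k p_{ik}^2 = 1$. On the probability simplex one always has $\sum_k p_{ik}^2 \le \big(\sum_k p_{ik}\big)\cdot\max_k p_{ik} \le \sum_k p_{ik}=1$, with equality in the first inequality only when the mass is supported where $p_{ik}$ attains its max and in the second only when $\max_k p_{ik}=1$; either way, $\sum_k p_{ik}^2=1$ forces a single coordinate equal to $1$, hence $p_i\in\{e_k\}$. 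The main obstacle is not any single implication — each is elementary — but rather making the boundary behavior rigorous: one must be careful that the entropy summand is extended by continuity at $0$ (so that $0\log 0:=0$) and that the maximization/Cauchy–Schwarz-type bounds for $(2)\Rightarrow(3)$ correctly track the equality cases under the simplex constraint rather than over all of $\mathbb{R}^D$. Once these edge cases are pinned down, assembling the four implications into the stated three-way equivalence is immediate.
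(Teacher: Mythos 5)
Your proof is correct, but it takes a genuinely different and in fact more elementary route than the paper's. For $(1)\Rightarrow(3)$ the paper forms a Lagrangian $L(p_i,\lambda)=\mathcal{H}(p_i)+\lambda(1-\sum_k p_{ik})$, locates the interior stationary point at the uniform distribution, and argues by coordinatewise monotonicity that $\mathcal{H}$ attains its maximum $\log D$ there and its minimum $0$ only at the vertices of the simplex; you instead observe that each summand $-x\log x$ is nonnegative on $[0,1]$ and vanishes only at $x\in\{0,1\}$, so the sum is zero iff every coordinate lies in $\{0,1\}$, and the simplex constraint then forces a one-hot vector. Your termwise argument is shorter, avoids any optimization machinery, and sidesteps the slightly informal step in the paper where stationarity of the Lagrangian (an interior condition) is used to draw conclusions about boundary minimizers. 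For $(2)\Leftrightarrow(3)$ the paper simply writes that the proof is ``similar and thus omitted,'' whereas you supply a complete argument: the identity $\sigma(p_i)=\frac{1}{D}\sum_k p_{ik}^2-\frac{1}{D^2}$ reduces the hypothesis to $\sum_k p_{ik}^2=1$, and the chain $\sum_k p_{ik}^2\le \max_k p_{ik}\cdot\sum_k p_{ik}\le 1$ with its equality cases forces a single unit coordinate. (Incidentally, this identity is essentially Eq.\ (A.6) in the paper's proof of Theorem 2, so your variance argument is consistent with computations the authors carry out elsewhere.) One cosmetic remark: you announce ``strict concavity'' for the entropy direction but the argument you actually run is the termwise nonnegativity one, which is all that is needed; the concavity framing can be dropped. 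Overall your version is complete where the paper's is partly omitted, and equally rigorous where they overlap.
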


\begin{proof}
We prove the equivalence by demonstrating \ref{entropy} $\Leftrightarrow$ \ref{one-hot} and \ref{variance} $\Leftrightarrow$ \ref{one-hot}.

As for the former equivalence, given that $\small p_i\in \left\{ e_k \right\}$, the entropy of $p_i$ can be easily computed as zero, thus \ref{one-hot} $\Rightarrow$ \ref{entropy}. Then we prove that \ref{entropy} $\Rightarrow$ \ref{one-hot}. Since $p_i$ is constrained by {\small $\sum_{k=1}^{D}p_{ik}=1$}, we construct a Lagrange function $L(p_i, \lambda)$ as follows:
\begin{equation}
\small
    L(p_i, \lambda) = \mathcal{H}(p_i) + \lambda (1-\sum\nolimits_{k=1}^{D}p_{ik}),
\end{equation}
where $\lambda$ is the Lagrange multiplier. Now, we analyze the extremum and monotonicity of $L(p_i, \lambda)$ by taking partial derivatives with respect to $p_{ik}$ and $\lambda$, as shown in Eq. (\ref{partial_L}).
\begin{equation}\label{partial_L}
\small
    \frac{\partial L}{\partial p_{ik}}=-1-\lambda -\log p_{ik}, \,\,\,\,k=1,2,...,D.
\end{equation}

By setting each partial derivative to zero, we can obtain that:
$\lambda=\log{D}-1, p_{ik}=D^{-1}, k=1,2,...,D$. If $0<p_{ik}<D^{-1}$, then $\partial L/\partial p_{ik}=-\log(Dp_{ik}) >0, k=1,2,...,D$. Similarly, if $D^{-1}<p_{ik}<1$, then $\partial L/\partial p_{ik}=-\log(Dp_{ik}) <0, k=1,2,...,D$. Consequently, $\mathcal{H}(p_i)$ is monotone increasing if $0<p_{ik}<D^{-1}$ and monotone decreasing if $D^{-1}<p_{ik}<1$ in the dimension of $p_{ik}$. Therefore, $\mathcal{H}(p_i)$ reaches maximum as $\log D$ when $p_i=D^{-1}\mathbf{1}_{1\times D}$, and reaches the minimum as zero when $p_{ik}\in \{0,1\}, k=1,2,..,D$. In other words, $\mathcal{H}(p_i)=0$ holds only if $p_i\in \{e_k\}, k=1,2,...,D$. Therefore, $\mathcal{H}(p_i)=0 \Rightarrow p_i\in \{e_k\}$, \ie, \ref{entropy} $\Rightarrow$ \ref{one-hot}. Finally, we can get the conclusion that \ref{entropy} is equivalent to \ref{one-hot}.

As for the latter equivalence, \ref{variance} $\Leftrightarrow$ \ref{one-hot}, the proof is similar and thus omitted here.

\end{proof}
Based on the analysis, the entropy and variance regularization of $p_i$ can effectively drive it towards a one-hot vector, discretizing both $p_{ik}$ and the sparsity score $\mathcal{V}$ as binary.
 
Having demonstrated the effectiveness of both regularization items, another question is why both regularization items should be employed. To answer this question, we analyze the respective contribution of the entropy and variance regularization items to the discretization process, making the following observations.

\begin{thm}\label{thm2}
\small
Let $\|\mathcal{H}(p_i)-0\|$ and $\|\sigma(p_i)-(D-1)/D^2\|$ denote the regularization items for entropy and variance, respectively. Then, the following properties hold:\\
$\left( 1 \right) \,\,\|\mathcal{H}(p_i)-0\|$ works as $\ell_1$ sparsity for $p_i$, guiding $p_i$ towards a potential one-hot vector;\\
$\left( 2 \right) \,\,\|\sigma(p_i)-(D-1)/D^2\|$ works as $\ell_2$ smoothness for $p_i$, facilitating a seamless transition from one potential target vector to another.
\end{thm}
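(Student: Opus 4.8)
The plan is to make the informal statements of Theorem~\ref{thm2} precise by viewing the two regularizers as penalty functions acting on the simplex $\Delta^{D-1}=\{p_i : \sum_k p_{ik}=1,\ p_{ik}\ge 0\}$, and then characterizing their gradients and level sets. First I would recast claim (1): since $\mathcal{H}(p_i)=-\sum_k p_{ik}\log p_{ik}$ and by Theorem~\ref{thm1} the minimizers of $\mathcal{H}$ on the simplex are exactly the one-hot vectors $\{e_k\}$, it suffices to show that the negative gradient of $\mathcal{H}$ (projected onto the tangent space of the simplex) pushes mass away from the uniform point and concentrates it on the currently dominant coordinate. Concretely, I would compute $\partial \mathcal{H}/\partial p_{ik} = -1-\log p_{ik}$ (reusing Eq.~(\ref{partial_L}) from the previous proof) and observe that a gradient-descent step on $\|\mathcal{H}(p_i)-0\| = \mathcal{H}(p_i)$ increases $p_{ik}$ precisely when $p_{ik}>1/D$ and decreases it when $p_{ik}<1/D$; hence the update is, up to the simplex projection, coordinate-wise of the same "shrink-the-small, grow-the-large" form as a soft-thresholding/$\ell_1$ step, which is exactly the sense in which $\mathcal{H}$ acts as an $\ell_1$-type sparsifier driving $p_i$ toward one of the $e_k$. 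I would also note the value $\mathcal{H}(e_k)=0$ is shared by every vertex, so the penalty gives no preference among vertices — it only sparsifies.

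Second I would treat claim (2). The key computation is expanding $\sigma(p_i)=\tfrac1D\sum_k(p_{ik}-\bar p)^2 = \tfrac1D\big(\|p_i\|_2^2 - 1/D\big)$ using $\sum_k p_{ik}=1$, so that $\sigma(p_i)$ is an affine function of $\|p_i\|_2^2$ and the regularizer $\|\sigma(p_i)-(D-1)/D^2\|$ is, up to constants, $\big|\ \|p_i\|_2^2 - (2D-1)/D^2 + 1/D\ \big|$ — more to the point, minimizing the distance of $\sigma$ to its one-hot value $(D-1)/D^2$ is equivalent to driving $\|p_i\|_2^2$ toward its maximal value $1$ on the simplex, and the set $\{p_i\in\Delta^{D-1}:\|p_i\|_2^2=1\}$ is again exactly $\{e_k\}$. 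The point of calling this "$\ell_2$ smoothness that allows transition between target vectors" is that, unlike $\mathcal{H}$, this penalty only constrains the $\ell_2$-shell (equivalently the radius on the simplex) and is invariant under permutation of coordinates and, crucially, does not by itself pin down which vertex: along the face connecting two vertices $e_a,e_b$, i.e.\ $p_i = t e_a + (1-t)e_b$, we have $\|p_i\|_2^2 = t^2+(1-t)^2$, so the variance penalty is small at both endpoints and the gradient is benign (vanishing first-order obstruction) in the direction $e_a-e_b$, permitting the optimizer to slide mass from one one-hot target to another as the adaptive sparsity target $\sigma^t_i$ shifts. I would formalize "seamless transition" as: the variance penalty restricted to any edge of the simplex is a smooth function with no interior local maximum that separates the two vertices — it is a single quadratic in $t$ with a unique maximum at the midpoint, hence both vertices lie in the same connected sublevel set for any threshold above the midpoint value.

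The main obstacle I anticipate is that the claims are stated only qualitatively ("works as $\ell_1$ sparsity", "works as $\ell_2$ smoothness"), so the real work is choosing the right precise formalization that is both faithful to how the penalties are used inside Eq.~(\ref{sparsity_loss})/(\ref{regularization}) and provable without heavy machinery; in particular, reconciling the $\tan$-activated form $\varPsi(p_i)=\tan(\pi/2 - \pi\omega_i)$ with the plain distance $\|\sigma(p_i)-(D-1)/D^2\|$ (they agree to first order near $\omega_i=1$ but not globally) needs a short lemma, and establishing the "transition" property rigorously requires being careful about the simplex-projection step in the actual (projected-)gradient dynamics rather than the ambient gradient. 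I would therefore structure the proof as: (i) an algebraic lemma reducing $\sigma$ to $\|p_i\|_2^2$ and locating the one-hot set as the $\ell_2$-maximizers; (ii) a gradient-sign computation for $\mathcal{H}$ showing the soft-thresholding behavior and hence the $\ell_1$ analogy; (iii) an edge-restriction argument on the simplex showing the variance penalty's sublevel sets connect distinct vertices, giving the $\ell_2$-smoothness/transition claim; and (iv) a remark that the $\tan$ activation preserves both properties near the operating point. Step (iii), controlling the effect of the simplex constraint on the dynamics, is where I expect to spend the most care.
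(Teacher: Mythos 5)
Your proposal is essentially sound, but it takes a genuinely different route from the paper on both claims. For claim (1), the paper does not argue via gradient dynamics at all: it simply notes that $\mathcal{H}(p_i)\ge 0$ lets one drop the norm, and then identifies $\mathcal{H}(p_i)$ as the $r\to 1$ limit of the R\'enyi entropy $\frac{1}{1-r}\log\bigl(\sum_k p_{ik}^r\bigr)$, so that the penalty is ``first-order'' and hence read as an $\ell_1$-type sparsity on the discrete distribution. Your gradient-sign argument is a more operational justification of the same qualitative claim; one small inaccuracy is that the projected negative-gradient coordinate is $\log p_{ik}-\frac{1}{D}\sum_l\log p_{il}$, so the growth/shrink threshold is the \emph{geometric mean} of the coordinates, not $1/D$ (they coincide only at the uniform point). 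For claim (2), you and the paper share the key algebraic step $(D-1)/D^2-\sigma(p_i)=\frac{1}{D}\bigl(1-\|p_i\|_2^2\bigr)$, but then diverge: the paper's notion of ``$\ell_2$ smoothness'' is a \emph{comparative} statement about the two penalties' landscapes near the uniform point --- the variance penalty is flatter there, and as $D$ grows the entropy peak sharpens while the variance peak flattens, which is why variance regularization leaves room to re-select the target vertex --- supported by a two-dimensional visualization and by the empirical learning curves in the appendix. Your edge-restriction argument (the penalty restricted to a face $t e_a+(1-t)e_b$ is a single quadratic, so both vertices lie in one connected sublevel set) is a cleaner and more rigorous formalization of ``seamless transition,'' and it buys vertex-connectivity without appealing to pictures; what it does not capture is the dimension-dependence contrast between the two penalties, which is the mechanism the paper actually leans on (and which motivates the subsequent tangent activation, treated there as a separate argument rather than the ``short lemma'' you anticipate). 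Neither approach is wrong given how informally the theorem is stated; yours is more self-contained, the paper's is more tightly coupled to the empirical behavior it is meant to explain.
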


\begin{figure*}[t]
\begin{minipage}{0.3\linewidth}
    \centering
    \small
    \resizebox{\linewidth}{!}{\includegraphics{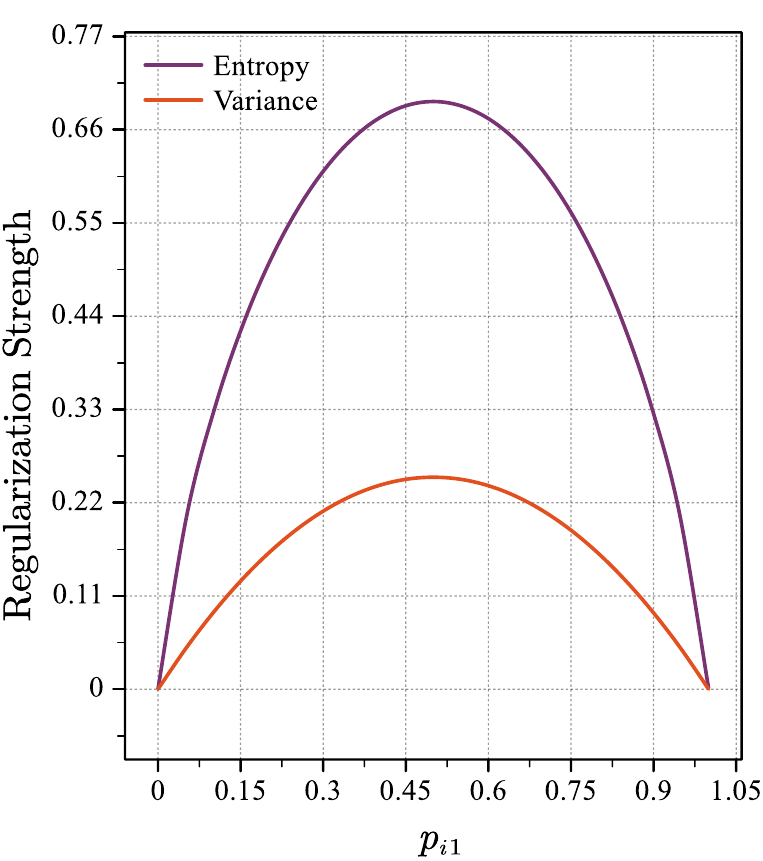}}
    \vspace{-7pt}
    \caption{Visualization of entropy and variance regularization distributions under the two-dimension $p_i$ setting for simplicity.\\}
    \label{reg_vis}
\end{minipage}
\hfill
    \begin{minipage}{0.3\linewidth}
        \centering
    \small
    \resizebox{\linewidth}{!}{
    \begin{tikzpicture}
    \node[anchor=south west,inner sep=0] (image) at (0,0) {\includegraphics{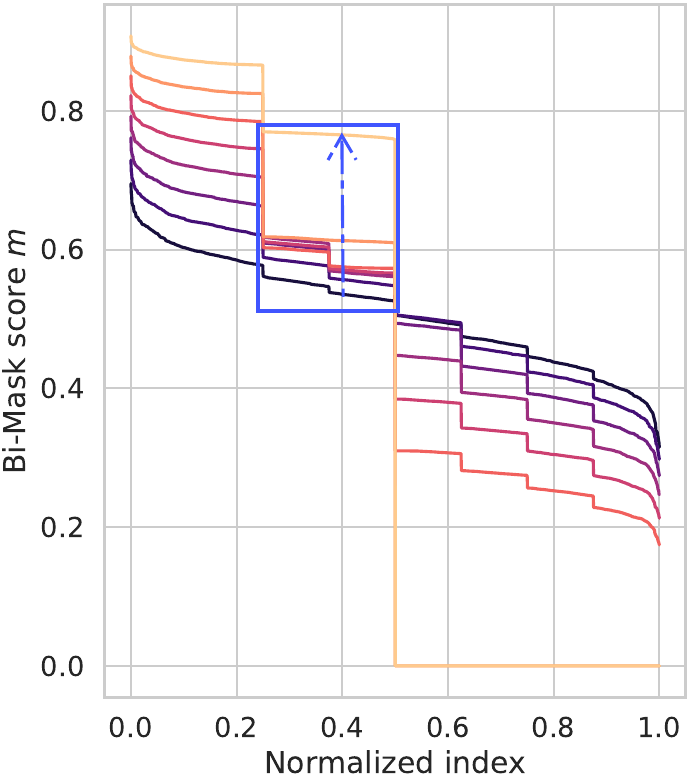}};
    \begin{scope}[x={(image.south east)},y={(image.north west)}]
    \node[fill=none, draw=blue, text=black, rounded corners, minimum width=2.4cm, minimum height=3.5cm, line width=2.5pt] at (0.475,0.71){};
    \draw[-{Stealth[length=15pt,width=8pt]}, dashed, line width=2pt] (0.5,0.62) -- (0.5,0.83);
    \end{scope}
    \end{tikzpicture}}
    \vspace{-7pt}
    \caption{Visualization of the learning process of one bi-mask sampled from the compression of DeiT-S with the target sparsity of 1BFLOPs and without variance regularization.}
    \label{no_var_vis}
    \end{minipage}
    \hfill
    \begin{minipage}{0.3\linewidth}
        \centering
    \small
    \resizebox{\linewidth}{!}{\begin{tikzpicture}
    \node[anchor=south west,inner sep=0] (image) at (0,0) {\includegraphics{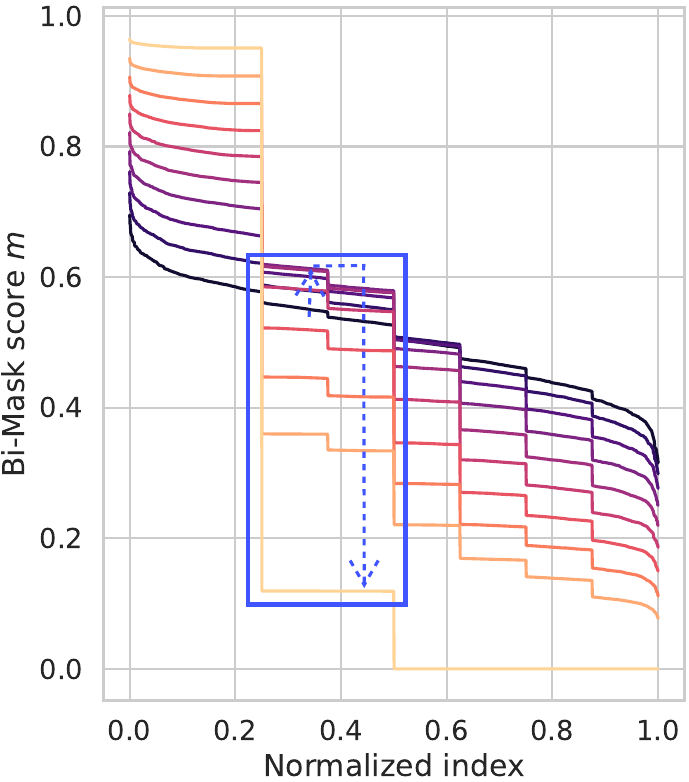}};
    \begin{scope}[x={(image.south east)},y={(image.north west)}]
    \node[fill=none, draw=blue, text=black, rounded corners, minimum width=2.4cm, minimum height=6cm, line width=2pt] at (0.475,0.45){};
    \draw[-{Stealth[length=15pt,width=8pt]}, dashed, line width=2pt] (0.45,0.6) -- (0.45,0.66);
    \draw[-{[length=15pt,width=8pt]}, dashed, line width=2pt] (0.45,0.66) -- (0.53,0.66);
    \draw[-{Stealth[length=15pt,width=8pt]}, dashed, line width=2pt] (0.53,0.66) -- (0.53,0.24);
    \end{scope}
    \end{tikzpicture}}
    \vspace{-7pt}
    \caption{Visualization of the learning process of one bi-mask sampled from the compression of DeiT-S with the target sparsity of 1BFLOPs and without entropy regularization.}
    \label{no_entr_vis}
    \end{minipage}
    \vspace{-1em}
\end{figure*}

\begin{proof}
We first derive the approximate order of two items to identify regularization types. Then we analyze the function of each regularization from the lens of the optimization space.

As for entropy regularization, since {\small $\mathcal{H}(p_i)\geq 0$}, the regularization can be simplified as {\small $\mathcal{H}(p_i)$}. Further, according to \cite{renyi_entropy}, $\mathcal{H}(p_i)$ can be regarded as the first-order entropy of the distribution $p_i$, as shown in Eq. (\ref{order_1_H}).
\begin{equation}\label{order_1_H}
\small
    \mathcal{H} \left( p_i \right) =\lim_{r\rightarrow 1} \mathcal{H} _r\left( p_i \right) =\lim_{r\rightarrow 1} \frac{1}{1-r}\log \left( \sum_{k=1}^D{p_{ik}^{r}} \right),
\end{equation}
where $\mathcal{H} _r\left( p_i \right)$ is the generalized entropy measure, Rényi Entropy. Therefore, $\|\mathcal{H}(p_i)-0\|$ can be regarded as $\ell_1$ sparsity for the discrete distribution of $p_i$.

As for variance regularization, since {\small $ \sigma(p_i)\leq(D-1)/D^2$}, the regularization can be simplified as {\small $\small (D-1)/D^2 - \sigma(p_i)$}. Further, we expand {\small $\sigma(p_i)$} into the polynomial form as follows:
\begin{equation}\label{var_l2}
\small
\begin{aligned}
    \frac{D-1}{D^2}-\sigma (p_i)&=\frac{D-1}{D^2}-\frac{\sum\nolimits_{k=1}^D{p_{ik}^{2}}-\frac{2}{D}\sum\nolimits_{k=1}^D{p_{ik}}+\frac{1}{D}}{D}
\\
&=\frac{D-1}{D^2}-\frac{\sum\nolimits_{k=1}^D{p_{ik}^{2}}-\frac{1}{D}}{D}
\\
&=\frac{1}{D}\left( 1-\sum\nolimits_{k=1}^D{p_{ik}^{2}} \right).
\end{aligned}
\end{equation}

From Eq. (\ref{var_l2}), minimizing {\small $(D-1)/D^2 - \sigma(p_i)$} can be regarded as maximizing $\ell_2$-norm of $p_i$. Therefore, the variance regularization cannot be viewed as the $\ell_2$ sparsity. Instead, compared with entropy regularization, we argue that variance regularization works as $\ell_2$ smoothness.

Specifically, we visualize the distributions of entropy and variance regularization in Fig. \ref{reg_vis}. Considering the normalization constraint, we focus on the two-dimensional setting of $p_i$ to simplify the analysis. From the figure, we observe that variance regularization distribution is flatter than entropy regularization in the region neighboring the maximum point. With the increase in the dimensionality of $p_i$, the entropy regularization becomes stronger (sharper peak), while the variance regularization becomes weaker (flatter peak). Consequently, the variance regularization effectively smooths the optimization space.
\end{proof}

Note that during optimization, entropy regularization is sensitive to the initialization of $p_i$, as the gradient continually drives the maximum $p_{ik}$ towards one. This behavior is independent of the sparsity constraint and leads to fixing the potential one-hot index throughout the pruning process. This issue is evident in the results of lines 1 and 5 in Table \textcolor{red}{5d} and Fig. \ref{no_var_vis}. In lines 1 and 5 of Table \textcolor{red}{5d}, the searched models have the same size and are the largest among all searched models. The bi-mask score learning process in Fig. \ref{no_var_vis}, which is sampled from one submodule in DeiT-S compressed by entropy regularization alone, shows that the bi-mask scores in different segments are continually increased or decreased. Therefore, \textbf{entropy regularization mainly contributes to the score sparsity but is constrained by the initial score distribution}.

In contrast, variance regularization is agnostic to the one-hot index and operates within a much flatter optimization space than entropy regularization. This characteristic allows variance regularization to adaptively adjust the target one-hot vector based on the search results of other submodules or units. Fig. \ref{no_entr_vis} visualizes the bi-mask score learning process from the same submodule as in Fig. \ref{no_var_vis}, employing the same compression target. The scores in the box initially increase and gradually decrease after the pruning of other units, indicating a switch in the target one-hot vector from [0, 0, 1, 0, 0, 0, 0] to [1, 0, 0, 0, 0, 0, 0]. Additionally, the results in lines 2 and 4 of Table \textcolor{red}{5d} demonstrate that models compressed solely with variance regularization achieve the smallest model size (4.0MParams, 0.8BFLOPs) while satisfying the sparsity constraint. Hence, \textbf{variance regularization primarily contributes to a smoother optimization space, enabling easier adjustment of the target one-hot vector to align with the pruning process of other units and the desired sparsity constraint.}

In summary, we demonstrate the effectiveness and necessity of both entropy and variance regularization items from the lens of equivalent properties, regularization types, and optimization contributions.

\section{Motivation behind Tangent Activation for Variance Regularization}\label{tangent}
As mentioned earlier, applying $\ell_1$ regularization to the discrete variable $p_i$ and continuous variable $\mathcal{S}$ enhances model sparsity while maintaining model performance. This observation is also supported by the results in Table \textcolor{red}{5d} (lines 1, 5, 6, and 7).

\begin{figure}[t]
    \centering
    \small
    \resizebox{0.6\linewidth}{!}{\includegraphics{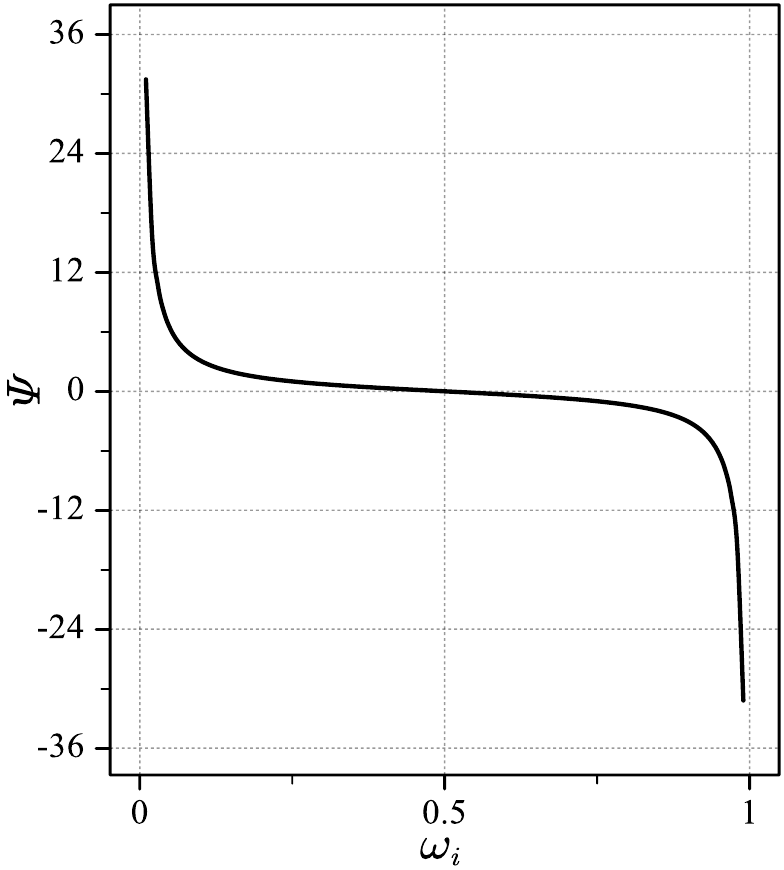}}
    \vspace{-7pt}
    \caption{Distribution of tangent-activated variance regularization.}
    \label{tan_vis}
\vspace{-25pt}
\end{figure}

Now, let's delve into the explanation for why we use tangent to map the variance regularization. This choice is primarily motivated by the over-smoothness present in the high-dimensional optimization space during the early search process. To better understand this, we analyze the regularization strength in the high-dimensional optimization space. In particular, we focus on scenarios where the dimensionality $D$ is much larger than 2 ({\small $D >> 2$}). In such cases, the maximum strength of {\small $(D-1)/D^2 - \sigma(p_i)$}, which equals {\small $(D-1)/D^2$}, is very close to the minimum strength of zero. As a result, the gradient of the variance regularization becomes extremely small compared to the gradient of the entropy regularization. Consequently, the impact of variance regularization on searching for the potential target one-hot vector becomes minimal, as the gradient of variance regularization is overwhelmed by that of entropy regularization.

\begin{figure*}[t]
    \centering
    \resizebox{\linewidth}{!}{
    \includegraphics{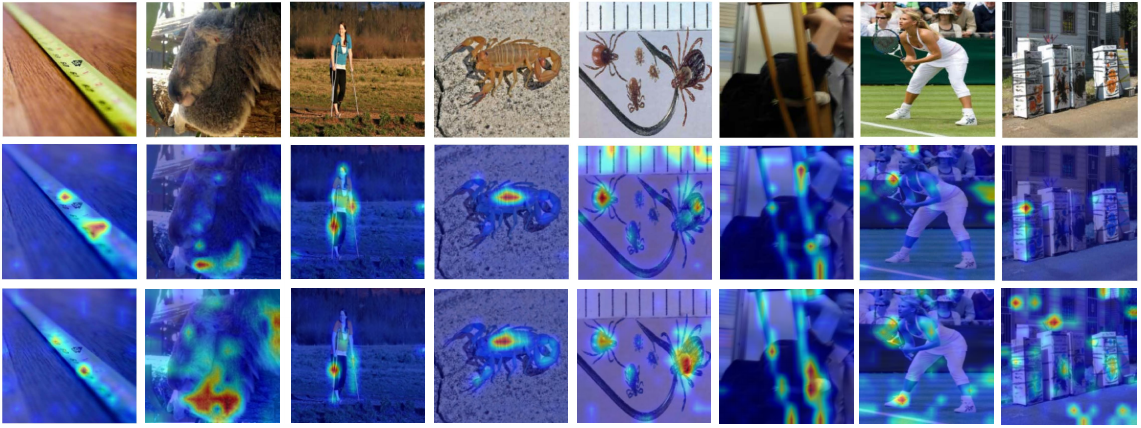}}
    \caption{Attention maps from different models for several images sampled from ImageNet-1K. The first row is the original images, the second row represents visualizations from DeiT-S, and the third row denotes the results from DeiT-B\_3.6BFLOPs.}
    \label{fig:attn}
\end{figure*}

To solve this problem, we propose utilizing tangent activation to produce a large gradient during the search for the potential target one-hot vector under the performance and sparsity objectives. Specifically, we design the activation as follows:
\begin{equation}
\small
    \varPsi(p_i) = \tan \left( \frac{\pi}{2}-\pi \omega _i \right),
\end{equation}
where {\small $\omega _i=\sigma _{i}/\sigma _{i}^{t} \in [0, 1]$} as mentioned in the main body. The distribution of {\small $\varPsi(p_i)$} is presented in Fig. \ref{tan_vis}, where {\small $\varPsi$} rapidly decreases when {\small $\omega_i$} is close to zero, \ie, $\sigma_i$ is close to zero. In other words, if the variance of $p_i$ is small, referring to the initial distribution of $p_i$ or the distribution after pruning small-score units, we will assign a larger gradient for variance regularization than entropy one. This prioritization allows for faster optimization of $p_i$ towards the potential target one-hot vector under the performance and sparsity objective. By doing so, we prevent entropy regularization from dominating the optimization process and ensure the target one-hot vector can be dynamically adjusted.

Once the potential target one-hot vector is found, the optimization process should prioritize entropy regularization. This is because entropy regularization, as an approximate $\ell_1$ sparsity measure, can promote sparsity in $p_i$ while maintaining model performance. Therefore, the gradient of variance regularization can be suppressed to minimize interference from other one-hot objectives. When the distribution of $p_i$ is close to a one-hot vector, meaning $\omega_i$ is close to one, the significant gradient of $\varPsi$ can facilitate the discretization of $p_i$ in the same optimization direction as entropy regularization. In this situation, the disturbance caused by variance regularization from other one-hot objectives is typically negligible.

Based on the above analysis, the main contribution of tangent activation is providing a large gradient to adjust the potential target one-hot vector of $p_i$ that satisfies the compression requirement every time the small-score units in $p_i$ are pruned. Therefore, as validated in lines 1, 6, 5, and 7 of Table \textcolor{red}{5d}, the variance regularization {\small $\varPsi(p_i)$} can drive the model to approach the target sparsity more closely and more efficiently.

\section{Implementation Details}\label{details}
OFB adopts the searching-and-retraining scheme as previous works do. All experiments are conducted with 8 V100 GPUs. In the search process, we use the pre-trained models released from official implementation on ImageNet-1K as the supernet $\mathcal{N}$. The decoder $\mathcal{F}_d$ consists of one convolutional layer and a pixel-shuffle layer as SimMIM \cite{simmim} does. We search for 100 epochs on DeiT-S and Swin-Ti, and 200 on DeiT-B, with 20 epochs for warming up. The other learning schedules and the augmentation strategy follow the official settings in the respective papers. The learning schedules of {\small $\alpha$}, {\small $\mathcal{S}$} and {\small $\mathcal{F}_d$} shares the same setting as {\small $W$}, except that {\small $\beta_1$} is set as 0.5 for the optimizer of {\small $\{\alpha, \mathcal{S}\}$}. The default values of $\mu_1, \mu_2, \mu_3, \mathrm{and\,} \eta$ are set as 5e-1, 1e2, 2e-5, and 2e-1, respectively. The unit pruning is initiated at every one-third interval ($\Delta\mathrm{T}$) within each epoch. In the retraining process, we follow the default training strategy reported in official papers except for mixup \cite{mixup} and cutmix \cite{cutmix}, which impair the retraining performance in our setting, and the learning rate is set as 6e-4 for both types of models. The masking ratio linearly increases from 1\% to 25\% of the input patches for DeiTs and that of the downsampled patches for Swin-Ti during the search stage. 
\section{Additional Attention Map Visualizations}\label{exp_vis}
We take DeiT-B\_3.6BFLOPs as an example to compare the attention maps with DeiT-S, which shares the same depth as DeiT-S and has higher performance but with smaller FLOPs and parameters. We adopt the method introduced in \cite{te} to visualize the attention maps from the output layer. The results are shown in Fig. \ref{fig:attn}. From the figure, it can be observed that the compressed model focuses more on the extraction of class-specific contextual information, meanwhile suppresses some useless information, \eg, the background features in the picture of the fifth column. This indicates that OFB can effectively evaluate the prunability of units in different submodules and finally preserve useful and important units to perform high compression performance.

\begin{table}[t]
\centering
\small
\begin{tabular}{lccc}
Case (w/o rt.) & Top-1 (\%) & FLOPs (B)& \#Param (M)\\
\shline
Uniform init & 63.5 & 0.6 & 3.0\\
Random init & \baseline{\textbf{72.8}} & \baseline{\textbf{1.1}} & \baseline{\textbf{5.3}}\\
\end{tabular}
\caption{Inductive bias analysis.}
\label{tab:inductive}
\end{table}

\section{Inductive Bias Analysis}\label{inductive}
We further explore the impact of inductive bias on the search performance. As shown in Table. \ref{tab:inductive}, with the same computation constraint, despite the smaller model size, the uniformly-initialized search space performs poorly in model performance, while the randomly-initialized one can achieve better tradeoff between model performance and compression budget, demonstrating the negative impact of inductive bias in the initialization of model parameters.

\section{Generalization Ability on CNNs}\label{cnn}
To test the generalization ability of OFB, we apply it to compressing ResNet-50 on ImageNet. As shown in Table \ref{tab:res50}, compared with baseline and SOTA models, OFB achieves comparable performance with higher compression ratio, which further demonstrates the superiority of OFB in generalization ability.
\begin{table}[t]
\centering
\small
\begin{tabular}{lccc}
Model &  Top-1 (\%) &  Top-5 (\%) & FLOPs (B)\\
\shline
ResNet50 & 76.2 & 92.9 & 4.1 \\
DepGraph \cite{Fang_2023_CVPR} & 75.8 & - & 2.0\\
OFB & \baseline{\textbf{75.8}} & \baseline{\textbf{92.6}} & \baseline{\textbf{1.6}}\\
\end{tabular}
\caption{Generalization Performance on ResNet-50.}
\label{tab:res50}
\end{table}


\end{document}